\newcommand{\ourmethod}{\text{LoRA-DA}}
\newcommand{\fisherinver}{\bm{J}(\bm{W}_0)^{-1}_{[i]} }
\newcommand{\mycomment}[1]{\hspace{\fill} \textcolor{black}{// #1}}
\theoremstyle{plain}
\newtheorem{theorem}{Theorem}[section]
\newtheorem{lemma}[theorem]{Lemma}
\theoremstyle{definition}
\newtheorem{auxlemma}[theorem]{Auxiliary Lemma}
\theoremstyle{remark}
\newtheorem{remark}[theorem]{Remark}
\icmltitlerunning{\textsc{LoRA-DA}: Data-Aware Initialization for Low-Rank Adaptation via Asymptotic Analysis}
\DeclareMathAlphabet{\mathbbb}{U}{bbold}{m}{n}  
\DeclareMathOperator*{\argmax}{arg\,max}
\newcommand\tsup[2][2]{%
	\def\useanchorwidth{T}%
	\ifnum#1>1%
	\stackon[-1.3ex]{\tsup[\numexpr#1-1\relax]{#2}}{\mathchar"307E\kern-.5pt}%
	\else%
	\stackon[-1ex]{#2}{\mathchar"307E\kern-.5pt}%
	\fi%
}
\newcommand{\X}{{\mathcal{X}}}
\newcommand{\cX}{\X}
\newcommand{\defeq}{\triangleq}
\begin{document}
\twocolumn[
  \icmltitle{\textsc{LoRA-DA}: Data-Aware Initialization for Low-Rank Adaptation \\via Asymptotic Analysis}



  \icmlsetsymbol{equal}{*}

  \begin{icmlauthorlist}
    \icmlauthor{Qingyue Zhang}{equal,tsinghua}
    \icmlauthor{Chang Chu}{equal,tsinghua}
    \icmlauthor{Tianren Peng}{tsinghua}
    \icmlauthor{Qi Li}{tsinghua}
    \icmlauthor{Xiangyang Luo}{tsinghua}
    \icmlauthor{Zhihao Jiang}{tsinghua}
    \icmlauthor{Shao-Lun Huang}{tsinghua}
  \end{icmlauthorlist}

  \icmlaffiliation{tsinghua}{Tsinghua Shenzhen International Graduate School, Tsinghua University, Shenzhen, China}

  \icmlcorrespondingauthor{Shao-Lun Huang}{twn2gold@gmail.com}

  \icmlkeywords{Machine Learning, ICML}

  \vskip 0.3in
]



\printAffiliationsAndNotice{\icmlEqualContribution}

\begin{abstract}

LoRA has become a widely adopted method for PEFT, and its initialization methods have attracted increasing attention. 
However, existing methods have notable limitations: 
many methods do not incorporate target-domain data, while gradient-based methods exploit data only at a shallow level by relying on one-step gradient decomposition.
In this paper, we establish a theoretical framework for data-aware LoRA initialization. 
Starting from minimizing the expectation of the parameter discrepancy between the fine-tuned and target models, we derive an optimization problem with two components: a bias term, which is related to the parameter distance between the fine-tuned and target models, and is approximated using a Fisher–gradient formulation to preserve anisotropy; and a variance term, which accounts for the uncertainty introduced by sampling stochasticity through the Fisher information.
Solving this problem yields an optimal initialization strategy for LoRA, based on which we develop an efficient algorithm, \ourmethod{}. Empirical results across multiple benchmarks demonstrate that \ourmethod{} consistently improves final accuracy over existing initialization methods. Additional studies show faster, more stable convergence, robustness across ranks, and only a small initialization overhead for \ourmethod{}.
The source code is available at \url{https://github.com/zqy0126/LoRA-DA}. 
\end{abstract}
\section{Introduction}

In recent years, large language models (LLMs) have advanced at an unprecedented pace, reshaping research in natural language processing and neighboring areas. By scaling parameters, data, and compute, LLMs exhibit strong generalization across diverse domains. Representative studies report compelling progress in instruction-following dialogue~\citep{ouyang2022training}, code reasoning and synthesis~\citep{li2022competition,wang2021codet5}, commonsense reasoning~\citep{toroghi2025llmbased}, mathematical problem solving ~\citep{wei2022chain,wang2023selfconsistency,hendrycks2measuring}, and multimodal understanding with vision--language modeling~\citep{li2023blip,dai2023instructblip}. 
However, these models often require full-parameter fine-tuning, the cost of which is prohibitively large.

To alleviate the prohibitive cost of full-parameter fine-tuning, a growing body of research has focused on parameter-efficient fine-tuning (PEFT) techniques, which adapt large models by introducing only a small set of trainable parameters. Among them, Low-Rank Adaptation (LoRA) has emerged as a highly influential approach, which injects low-rank matrices into pretrained weight space to enable efficient adaptation without modifying the original parameters~\citep{hu2022lora}. 
Moreover, LoRA-FA simplifies LoRA by freezing $A$ and training only $B$, cutting trainable parameters by half while maintaining competitive performance.
Despite these advantages, LoRA and LoRA-FA share a common initialization scheme: the $A$ matrix is randomly initialized (or frozen as random in LoRA-FA), while the $B$ matrix is initialized to zero. Such a scheme ensures that no adaptation is applied at the very beginning of training, but it also introduces two drawbacks~\citep{meng2024pissa,zhangonelora}: (i) the training process starts slowly due to the absence of informative initialization, and (ii) the models may fail to converge to an optimal solution. 

Compared with conventional random initialization in LoRA, recent work has highlighted the critical role of initialization for both convergence speed and final performance, motivating alternative initialization strategies.
Prior approaches were generally data-agnostic, without incorporating information from the target task, such as PiSSA~\citep{meng2024pissa} and MiLoRA~\citep{wang2025milora}, which adapt the singular vectors or singular values of pretrained weight matrices to exploit the inherent structural properties of the original parameters.
More recently, several data-aware approaches have been proposed that leverage a small set of target-domain samples for initialization. For example, LoRA-GA~\citep{wang2024loraga} and LoRA-One~\citep{zhangonelora} exploit target-domain gradients to construct the low-rank subspace. However, their own experiments show that the performance of one-step fine-tuning is not only suboptimal but also markedly inferior to that of vanilla LoRA, raising concerns about whether gradient-only initialization is sufficient. Moreover, their conclusions either lack rigorous theoretical support or rely on strong mathematical assumptions about the input vector of the layer, such as isotropic centered sub-Gaussian assumption. However, recent empirical studies have shown that representations in transformer-based models are far from isotropic~\citep{godey2024anisotropy}. In our view, relying solely on gradients to approximate the parameter discrepancy overlooks the anisotropy of the parameter space. 
Moreover, beyond parameter discrepancy, the variance induced by sampling stochasticity also contributes to the training error, yet such methods fail to account for it in their initialization strategies.

In this work, we propose a data-aware LoRA initialization method grounded in asymptotic analysis. Specifically, we formulate the optimization objective as minimizing the expectation of the discrepancy between the parameters of fine-tuned and target models. 
By applying asymptotic analysis, we reformulate the optimization of the bound on this objective into a quadratic optimization problem with the LoRA initialization parameters as target variables. 
The \textbf{Initialization Guidance Matrix}, serving as the coefficient matrix in the quadratic optimization, consists of two terms:
a \textbf{variance term}, capturing the uncertainty caused by sampling stochasticity through Fisher information, and a \textbf{bias term}, which relates to the discrepancy between the fine-tuned and target models, approximated via a Fisher–gradient approach that preserves the anisotropic structure of the parameter space.
Solving this problem yields an optimal initialization strategy for LoRA.
Building upon this 
theory, we propose \ourmethod{}, a general data-aware LoRA initialization algorithm. Extensive experiments across multiple tasks demonstrate the effectiveness and robustness of our algorithm. 

In summary, our contributions are as follows:  
\begin{enumerate}
    \item \textbf{Theoretical foundation.} We establish a theoretical framework for data-aware LoRA initialization based on asymptotic analysis. 
    Our formulation decomposes the bound on expected estimation error into a variance term and a bias term, yielding a method for computing the optimal LoRA initialization.

    \item \textbf{Algorithm design.} Building on these theoretical insights, we propose \ourmethod{}, a practical algorithm that leverages a small set of target-domain samples to estimate the statistics required by our theoretical framework and derive the optimal initialization of LoRA. The proposed initialization algorithm is architecture-agnostic.

    \item \textbf{Empirical validation.} 
    We conduct extensive experiments on both natural language understanding benchmarks and natural language generation benchmarks,
    where \ourmethod{} consistently outperforms state-of-the-art initialization methods, achieving average improvements of \textbf{0.3\%} on natural language understanding and \textbf{1.0\%} on natural language generation over prior SOTA with the 7B model. Additional studies confirm the applicability of \ourmethod{} to larger model, as well as faster and more stable convergence, robustness across different ranks, and a small initialization overhead.
\end{enumerate}

\noindent \textbf{Conflict of Interest Disclosure:} The authors declare that they have no financial conflicts of interest related to this work.

\section{Preliminaries}
In this section, we introduce the mathematical preliminaries required for our theoretical framework. 
Section~\ref{sub:lora_and_lorafa} provides the formal definition and background of LoRA and LoRA-FA. 
Section~\ref{sub:Asymptotic Normality of the MLE} presents the asymptotic normality of the maximum likelihood estimator (MLE) used to model sampling stochasticity, and the Fisher information matrix which is not only related to asymptotic normality but also used to model the anisotropy of the parameter space. Finally, Section~\ref{Eigenvalue_theorem} reviews the correspondence between the decomposition of eigenvalues and the solution of the quadratic optimization problem, which we will exploit in our method.

\subsection{Low-Rank Adaptation (LoRA) and LoRA-FA}
\label{sub:lora_and_lorafa}
LoRA enables PEFT of pre-trained networks by inserting low-dimensional trainable components.
Instead of fine‐tuning all existing weights, LoRA appends two compact matrices \( \bm{A} \in \mathbb{R}^{d_1 \times r} \) and \( \bm{B} \in \mathbb{R}^{r \times d_2} \), with \( r \ll \min(d_1, d_2) \).  
The weight adaptation of LoRA can be expressed as
\begin{align}
Y = Z\hat{\bm{W}}
    = Z\bigl(\bm{W}_0 + \Delta \bm{W}\bigr)
    = Z\bigl(\bm{W}_0 + \bm{A}\bm{B}\bigr),
\end{align}

where $Z$ is the input, $Y$ is the output, and $\Delta \bm{W} $denotes the low‐rank update.  
Typically, $\bm{A}$ is initialized from a Kaiming uniform distribution~\citep{he2015delving}, and $\bm{B}$ is initialized to zeros 
.  

LoRA-FA (LoRA with Frozen-A)~\citep{zhang2023loraFA} is a memory-efficient variant of standard LoRA. In conventional LoRA, both low-rank matrices $\bm{A}$ and $\bm{B}$ are trained, requiring storage for gradients and optimizer states of both. LoRA-FA freezes $\bm{A}$ after initialization, and only updates $\bm{B}$. This eliminates the need to store $\bm{A}$’s activations and optimizer states, substantially reducing fine-tuning memory usage. The design of LoRA-FA is further supported by empirical and theoretical evidence. The original LoRA work~\citep{hu2022lora} found that assigning a larger learning rate to $\bm{B}$ than to $\bm{A}$ leads to better performance, suggesting that $\bm{B}$ contributes more to adaptation. More recent work~\citep{zhu2024asymmetry} systematically studied this asymmetry, showing that training $\bm{B}$ is inherently more effective, while frozen $\bm{A}$ has little impact on final accuracy.

\subsection{Asymptotic Normality of the Maximum Likelihood Estimator (MLE)}
\label{sub:Asymptotic Normality of the MLE}
When estimating the underlying parameter $  \bm{\theta}^*\in\mathbb{R}^d$ from independent and identically distributed (i.i.d.) samples drawn from $ P_{X;\bm{\theta}^*} $, we denote $\mathcal{D}$ as a set of $ N $ i.i.d. samples from the distribution. Then the MLE can be expressed as
\begin{align}
\label{mle_target}
\hat{{\bm{\theta}}}_{MLE}=\argmax_{{\bm{\theta}}} {\frac{1}{N}\sum_{x \in \mathcal{D}}} \log P_{X;{\bm{\theta}}}(x).
\end{align}
Under standard regularity conditions, the MLE satisfies the following asymptotic normality~\cite{van2000asymptotic}:
\begin{align}
\sqrt{N} \left( \hat{{\bm{\theta}}}_{\text{MLE}} - \bm{\theta}^* \right) \xrightarrow{d} \mathcal{N}\left(0, \bm{J}(\bm{\theta}^*)^{-1} \right),
\end{align}
where “${-1}$” denotes the matrix inverse and $\bm{J}({\bm{\theta}})$ is the Fisher information matrix defined as:
\begin{align}
    \bm{J}({\bm{\theta}})^{d\times d } &= \mathbb{E} \bigg[ \left( \frac{\partial}{\partial {\bm{\theta}}} \log P_{X;{\bm{\theta}}} \right) \left( \frac{\partial}{\partial {\bm{\theta}}} \log P_{X;{\bm{\theta}}} \right)^\top \bigg].
\end{align}
Intuitively, the Fisher information matrix measures the sensitivity of the model to perturbations along different parameter directions, thus serving as a descriptor of the anisotropy in the parameter space.


\subsection{Eigenvalue Decomposition for Quadratic Form Minimization}
\label{Eigenvalue_theorem}
We consider the constrained quadratic minimization problem
\begin{align}
    \min_{\bm{Q} \in \mathbb{R}^{d \times r}}  \operatorname{tr}(\bm{Q}^{\top} \bm{M} \bm{Q}) 
    \quad \text{s.t.} \quad \bm{Q}^{\top} \bm{Q} = \bm{I}_r,
\end{align}
where $\bm{M} \in \mathbb{R}^{d \times d}$ is symmetric. By the eigenvalue decomposition
\begin{align}
    \bm{M} = \bm{U} \bm{\Lambda} \bm{U}^{\top}, 
    \quad \bm{\Lambda} = \mathrm{diag}(\lambda_1,\ldots,\lambda_d), 
    \quad \lambda_1 \geq \cdots \geq \lambda_d,
\end{align}
the Courant--Fischer min--max theorem \citep{horn2012matrix} ensures that the optimal solution is
\begin{align}
    \bm{Q}^{*} = 
    \begin{bmatrix}
        \bm{u}_d & \bm{u}_{d-1} & \cdots & \bm{u}_{d-r+1}
    \end{bmatrix},
\end{align}
where $\bm{u}_i$ is the eigenvector corresponding to the eigenvalue $\lambda_i$. 
The minimum objective value is
\begin{align}
    \min_{\bm{Q}} \operatorname{tr}(\bm{Q}^{\top} \bm{M} \bm{Q}) 
    = \sum_{i=d-r+1}^{d} \lambda_i.
\end{align}

\section{Problem Formulation}
Let $P_{X, \bm{W}_0}$ denote a pre-trained model, and $P_{X,\bm W_{\mathrm{tgt}}}$ denotes the target model in the fine-tuning process, where $\bm{W} \in \mathbb{R}^{d_1 \times d_2}$ is the parameter matrix, and 
$X$ represents the joint distribution of inputs and outputs. For example, when the task is a supervised classification task, $P_{X, \bm{W}_0}$ corresponds to the joint distribution model of input features $Z$ and output labels $Y$, \textit{i.e.}, $X=(Z,Y)$. 
Suppose that we are given a set of $N_{}$ samples $\{ x_1,\dots, x_N\}$ i.i.d. drawn from $P_{X, \bm{W}_{\mathrm{tgt}}}$ for the fine-tuning of LoRA, whose form is
\begin{align}
\hat{\bm{W}}  = \bm{W}_0 + \bm{A}\bm{B}, 
\quad \bm{A} \in \mathbb{R}^{d_1 \times r}, \ \bm{B} \in \mathbb{R}^{r \times d_2},
\end{align}
where  
$r \ll \min(d_1,d_2)$.
Fine-tuning with cross-entropy is equivalent to MLE, while LoRA restricts updates to a low-rank subspace, yielding a constrained MLE:
\begin{align}
\label{eq:def_hatw}
\hat{\bm{W}} =
\argmax_{\bm{W} \in \bigl\{\bm{W}_0 + \bm{A}\bm{B} \big| 
    \bm{A} \in \mathbb{R}^{d_1 \times r}, 
    \bm{B} \in \mathbb{R}^{r \times d_2} \bigr\}}
    \frac{1}{N_{}}
\sum_{i=1}^{N_{}} \log P_{X, \bm{W}}(x_i).
\end{align}

We first investigate the initialization of matrix $\bm{A}$ under the LoRA-FA framework, i.e., the setting where $\bm{A}$ remains frozen during fine-tuning.
Considering the practical similarity between LoRA-FA and full LoRA in terms 
of both training behavior and empirical performance, as illustrated in 
Subsection~\ref{sub:lora_and_lorafa}, our analysis in the LoRA-FA setting 
does not compromise generality. The initialization strategy derived from this theoretical analysis is not restricted to LoRA-FA but can also be directly applied to standard LoRA, as validated empirically in Section \ref{sec:experiment}.
In addition, we constrain the $\bm{A}$ to be column-orthogonal, i.e.  $\bm{A}^\top \bm{A}= \bm{I}_r$,
which prevents redundancy among low-rank directions, and facilitates better-conditioned optimization. Our objective is to design a more effective initialization $\bm{A}_0$ such that the resulting estimator $\hat{\bm{W}}$ minimizes the expected squared Frobenius norm to the true parameter $\bm{W}_{\mathrm{tgt}}$, i.e.,
\begin{align}
\label{optimization_target_function}
    \bm{A}_0^*=\arg\min_{\bm{A}} \ \mathbb{E} \left[ \left\| \hat{\bm{W}} - \bm{W}_{\mathrm{tgt}} \right\|_F^2 \right],
\end{align}
where $\|\cdot\|_F$ denotes the Frobenius norm~\citep{horn2012matrix}, and $\hat{\bm{W}}$ is defined as \eqref{eq:def_hatw}, thereby being associated with $\bm A$. 
To facilitate the subsequent mathematical derivations, we assume that the distance between the parameters of the target sample model and the source pre-trained model is sufficiently small, \textit{i.e.}, $\| \bm{W}_{\mathrm{tgt}} - \bm{W}_0 \|_F = O\left(\tfrac{1}{\sqrt{N_{}}}\right)$. 
This assumption is made without loss of generality, since fine-tuning typically targets tasks near the pre-training model. Beyond the well-known similarity in early layers~\citep{raghu2019transfusion}, cross-layer evidence shows that keeping weights close to the pre-trained parameters benefits fine-tuning stability and generalization~\citep{lee2020mixout,chen2020recall}. 
Similar assumptions have also been adopted in the transfer learning literature~\citep{zhang2025theoretical}.




\section{Main Result}
In this section, we present the procedure by which the optimal initialization of LoRA 
is derived through minimizing the objective function \eqref{optimization_target_function}. 
To make the mathematical derivation and results more accessible to the reader, 
we first provide the theoretical analysis in the simplified case where the output dimension 
is $d_{2}=1$ in Section \ref{subsec:one-dimensional case}. 
We then extend the result to the general high-dimensional setting of standard LoRA in Section \ref{subsec:Standard_LORA_Format}, 
and finally describe the practical algorithm \ourmethod{} in Section \ref{subsec:algorithm}.

\subsection{One-Dimensional Case}
\label{subsec:one-dimensional case}
We first consider the simple case where the output dimension is $d_{2}=1$, i.e., $\bm{W} \in \mathbb{R}^{d_{1}\times 1}$. In this setting, the estimator can be expressed as
$
    \hat{\bm{W}} = \bm{W}_0 + \bm{A} \, \bm{B},
$
where $\bm{A} \in \mathbb{R}^{d_1 \times r}$ is a fixed column-orthogonal matrix and $\bm{B} \in \mathbb{R}^{r \times 1}$. Then we have the following theorem.

\begin{theorem}(proved in Appendix \ref{appendix:one_dimensional})
\label{thm:one_dimensional}
In LoRA fine-tuning from $P_{X;\bm W_0}$ to $P_{X;\bm W_{\mathrm{tgt}}}$, 
we assume $\bm W_0, \bm W_{\mathrm{tgt}} \in \mathbb{R}^{d_1\times 1}$, and
$
\| \bm W_{\mathrm{tgt}} - \bm W_0 \|
= O\!\left( \tfrac{1}{\sqrt{N}} \right).
$
Then, the optimal initialization of the LoRA matrix $\bm A$ is given by
\begin{align}
\bm{A_0}^*=\arg\min_{\bm{A}}\operatorname{tr}\left(\bm{A}^\top\bm{\Omega}\bm{A}\right),
\end{align}
and from Section \ref{Eigenvalue_theorem} we know that the $r$ column vectors of $\bm{A_0}^*$ correspond to the eigenvectors of $\bm{\bm{\Omega}}$ associated with its $r$ smallest eigenvalues, where $\bm{\Omega}$ is the \textbf{Initialization Guidance Matrix} given by
\begin{align}
\label{Initialization Guidance Matrix_one dimension}
    \bm{\Omega}^{d_1\times d_1}
= \left(
    \underbrace{\tfrac{\bm{J}(\bm{W}_0)^{-1}}{N_{}}}_{\text{variance term}}  
    \underbrace{-\bigl(\bm{W}_{\mathrm{tgt}} - \bm{W}_0 \bigr)\bigl(\bm{W}_{\mathrm{tgt}} - \bm{W}_0 \bigr)^\top}_{\text{bias term}}
  \right)
\end{align}
\end{theorem}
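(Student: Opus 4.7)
The plan is to combine the asymptotic normality of the MLE (Section~\ref{sub:Asymptotic Normality of the MLE}) with a bias--variance decomposition of $\mathbb{E}\lVert\hat{\bm{W}}-\bm{W}_{\text{tgt}}\rVert_F^2$ in closed form, and then invoke the eigenvalue characterization of Section~\ref{Eigenvalue_theorem} to read off the optimal $\bm{A}_0^*$. Throughout I set $\bm{\delta}\triangleq \bm{W}_{\text{tgt}}-\bm{W}_0$ and use the small-perturbation scaling $\lVert\bm{\delta}\rVert_F = O(1/\sqrt{N})$ to keep only the leading order in $1/\sqrt{N}$.

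First I would Taylor-expand the constrained log-likelihood $L(\bm{B})\triangleq \tfrac{1}{N}\sum_i \log P_{X,\bm{W}_0+\bm{A}\bm{B}}(x_i)$ to second order in $\bm{B}$ around $\bm{B}=\bm{0}$. Its first-order optimality condition, combined with $-\nabla^2 L(\bm{W}_0) \approx \bm{J}(\bm{W}_0)$, gives a closed-form leading-order expression for $\hat{\bm{B}}$ in terms of $\nabla L(\bm{W}_0)$. A score Taylor expansion at $\bm{W}_{\text{tgt}}$, together with $\mathbb{E}_{\bm{W}_{\text{tgt}}}[\nabla\log P_{X,\bm{W}_{\text{tgt}}}]=\bm{0}$, then yields $\mathbb{E}[\nabla L(\bm{W}_0)] \approx \bm{J}(\bm{W}_0)\bm{\delta}$ and $\operatorname{Cov}(\nabla L(\bm{W}_0)) \approx \bm{J}(\bm{W}_0)/N$. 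The Fisher--gradient approximation announced in the introduction is what then lets me rewrite the leading order so that $\mathbb{E}[\hat{\bm{W}}-\bm{W}_0] \approx \bm{A}\bm{A}^\top \bm{\delta}$ and $\operatorname{Cov}(\hat{\bm{W}}) \approx \bm{A}\bm{A}^\top \bm{J}(\bm{W}_0)^{-1} \bm{A}\bm{A}^\top / N$.

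Second I would evaluate the Frobenius objective by splitting it into bias$^2$ plus variance. Using $\bm{A}^\top \bm{A}=\bm{I}_r$, the bias term $\bm{\delta}^\top(\bm{I}-\bm{A}\bm{A}^\top)\bm{\delta}$ collapses to $\lVert\bm{\delta}\rVert^2 - \operatorname{tr}(\bm{A}^\top \bm{\delta}\bm{\delta}^\top \bm{A})$, and the variance term, via trace cyclicity, to $\operatorname{tr}(\bm{A}^\top \bm{J}(\bm{W}_0)^{-1}\bm{A})/N$. Adding them gives
\begin{align*}
\mathbb{E}\lVert\hat{\bm{W}}-\bm{W}_{\text{tgt}}\rVert_F^2 \;=\; \lVert\bm{\delta}\rVert^2 + \operatorname{tr}\!\left(\bm{A}^\top\!\left[\tfrac{\bm{J}(\bm{W}_0)^{-1}}{N}-\bm{\delta}\bm{\delta}^\top\right]\!\bm{A}\right) \;=\; \lVert\bm{\delta}\rVert^2 + \operatorname{tr}(\bm{A}^\top \bm{\Omega}\bm{A}),
\end{align*}
so the $\bm{A}$-independent constant $\lVert\bm{\delta}\rVert^2$ drops out, and minimizing \eqref{optimization_target_function} reduces to $\min_{\bm{A}^\top\bm{A}=\bm{I}_r}\operatorname{tr}(\bm{A}^\top\bm{\Omega}\bm{A})$. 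Applying the Courant--Fischer identity from Section~\ref{Eigenvalue_theorem} then identifies $\bm{A}_0^*$ with the eigenvectors of $\bm{\Omega}$ associated with its $r$ smallest eigenvalues.

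The main obstacle I expect lies in the Fisher--gradient reduction in the first step. The bare constrained MLE is naturally governed by the oblique $\bm{J}$-projector $\bm{A}(\bm{A}^\top\bm{J}(\bm{W}_0)\bm{A})^{-1}\bm{A}^\top\bm{J}(\bm{W}_0)$ rather than by the orthogonal projector $\bm{A}\bm{A}^\top$, and the clean variance coefficient $\bm{J}(\bm{W}_0)^{-1}/N$ only emerges once one shows that the cross terms mixing $\bm{\delta}$ with the sampling noise, as well as higher-order remainders, are absorbed into $o(1/N)$ under the small-perturbation scaling. Getting this bookkeeping right---so that what survives is exactly $\bm{\Omega}=\bm{J}(\bm{W}_0)^{-1}/N-\bm{\delta}\bm{\delta}^\top$ and nothing else---is where most of the technical work sits.
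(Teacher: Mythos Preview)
Your overall architecture---bias--variance split, reduction to $\operatorname{tr}(\bm{A}^\top\bm{\Omega}\bm{A})$, then Courant--Fischer---matches the paper's, and you have correctly located the crux: the constrained MLE naturally produces $(\bm{A}^\top\bm{J}\bm{A})^{-1}$ (equivalently the oblique projector $\bm{A}(\bm{A}^\top\bm{J}\bm{A})^{-1}\bm{A}^\top\bm{J}$), not $\bm{A}^\top\bm{J}^{-1}\bm{A}$. But your proposed resolution is wrong. The discrepancy between $\operatorname{tr}\bigl((\bm{A}^\top\bm{J}\bm{A})^{-1}\bigr)/N$ and $\operatorname{tr}\bigl(\bm{A}^\top\bm{J}^{-1}\bm{A}\bigr)/N$ is \emph{not} a higher-order remainder or a cross term with the noise: both are exactly order $1/N$, and their ratio is an $O(1)$ quantity determined by how far $\operatorname{col}(\bm{A})$ is from being $\bm{J}$-invariant. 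No amount of bookkeeping under the $\lVert\bm{\delta}\rVert=O(1/\sqrt{N})$ scaling will absorb this into $o(1/N)$. The same issue contaminates your bias claim $\mathbb{E}[\hat{\bm{W}}-\bm{W}_0]\approx\bm{A}\bm{A}^\top\bm{\delta}$: the leading-order mean is the oblique projection, and its distance to the orthogonal projection is $O(1/\sqrt{N})$, so the squared bias you write down is off by an $O(1/N)$ term, again not negligible.

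The paper does not try to close this gap asymptotically. Instead it (i) performs the Pythagorean split with respect to the \emph{orthogonal} projection $\bm{W}_{\text{tgt}}^{\text{proj}}=\bm{W}_0+\bm{A}\bm{A}^\top\bm{\delta}$, which is exact; (ii) invokes constrained-MLE asymptotics to get $\mathbb{E}\lVert\hat{\bm{W}}-\bm{W}_{\text{tgt}}^{\text{proj}}\rVert^2=\operatorname{tr}\bigl((\bm{A}^\top\bm{J}(\bm{W}_{\text{tgt}})\bm{A})^{-1}\bigr)/N+o(1/N)$; and (iii) proves the Loewner inequality $(\bm{A}^\top\bm{J}\bm{A})^{-1}\preceq\bm{A}^\top\bm{J}^{-1}\bm{A}$ via the variational identity $v^\top M^{-1}v=\max_x\bigl(2v^\top x-x^\top Mx\bigr)$ restricted to $\operatorname{col}(\bm{A})$, yielding $\operatorname{tr}\bigl((\bm{A}^\top\bm{J}\bm{A})^{-1}\bigr)\le\operatorname{tr}\bigl(\bm{A}^\top\bm{J}^{-1}\bm{A}\bigr)$. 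Together with a separate lemma replacing $\bm{J}(\bm{W}_{\text{tgt}})^{-1}$ by $\bm{J}(\bm{W}_0)^{-1}+O(1/\sqrt{N})$, this gives an \emph{upper bound} $\mathbb{E}\lVert\hat{\bm{W}}-\bm{W}_{\text{tgt}}\rVert^2\le\lVert\bm{\delta}\rVert^2+\operatorname{tr}(\bm{A}^\top\bm{\Omega}\bm{A})+o(1/N)$, and the theorem is really about minimizing this bound (the main text says so explicitly). So the missing ingredient in your plan is the matrix inequality, not finer remainder control.
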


\begin{figure}[ht]
\centering
\includegraphics[width=\columnwidth]{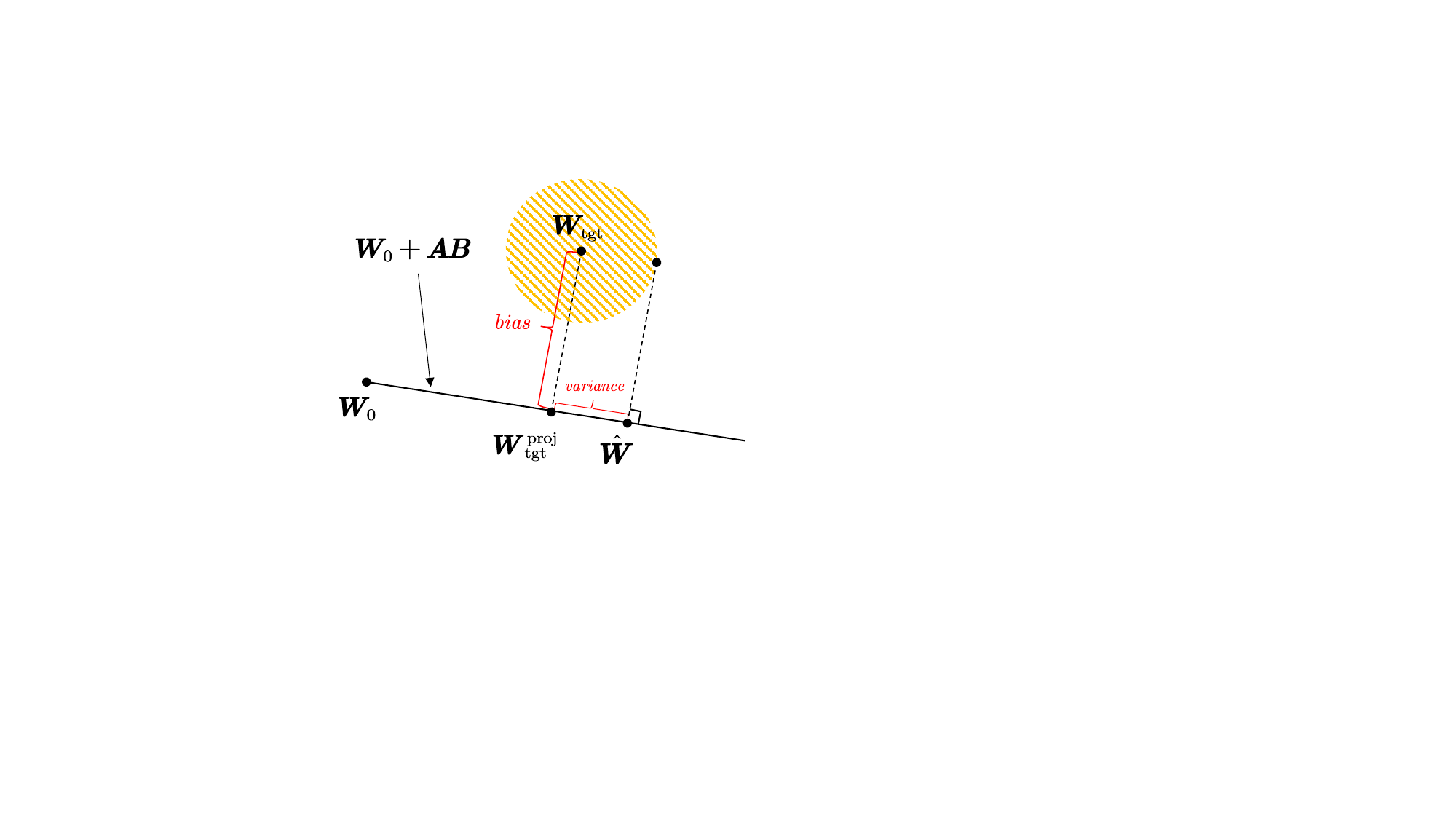}
\caption{The yellow circle illustrates the estimation variance induced by the 
stochasticity of training samples in the unconstrained setting. The red 
variance term represents its projection onto the LoRA subspace under the 
fixed-$\bm{A}$ constraint, while the red bias term corresponds to the 
approximation error due to the distance between $\bm{W}_{\mathrm{tgt}}$ and 
the LoRA subspace.
}
\label{fig:variance_proj}
\end{figure}

In Fig.~\ref{fig:variance_proj}, we provide an explanation of the result in Theorem~\ref{thm:one_dimensional}. 
Specifically, the bound on the objective function in \eqref{optimization_target_function} can be decomposed into two components: 
a variance term models the sampling stochasticity associated with the Fisher information and sample size, 
and a bias term arising from the discrepancy between the target parameter $\bm{W}_{\mathrm{tgt}}$ and the LoRA subspace determined by $\bm{W}_{0}$ and fixed $A$. It should be noted that the bias term in \eqref{Initialization Guidance Matrix_one dimension} excludes the invariant component of the optimization, and thus appears with a minus sign. Its complete form can be found in \eqref{bias}.

In \eqref{Initialization Guidance Matrix_one dimension}, a natural question arises: 
how to approximate the term $\bm{W}_{\mathrm{tgt}}-\bm{W}_0$, i.e., the discrepancy between 
the target parameters $\bm{W}_{\mathrm{tgt}}$ and the pre-trained parameters $\bm{W}_0$. 
Prior works often approximate this discrepancy directly from the negative raw gradient. 
We instead adopt a more effective approach, namely the \textbf{Fisher-gradient}, approximating it as  
the negative inverse Fisher times the gradient.
Unlike the raw gradient, this Fisher-weighted form adaptively scales directions 
by their uncertainty or information content, thereby capturing the model’s anisotropy. 
The Fisher-gradient formulation builds on the classical notion of the natural gradient~\citep{amari1998natural}. 
In our framework, the Fisher matrix is already obtained in the process of evaluating the variance term and can therefore be reused, which provides a natural motivation for incorporating the Fisher-gradient formulation into our initialization strategy.
We next provide the theoretical justification and formulation of this approach.

We employ a local second-order expansion around $\bm{W}_0$. Specifically, we define the empirical loss on the target distribution as
$
    \mathcal{L}_{\mathrm{tgt}}(\bm{W}) = \tfrac{1}{N_{}}\sum_{j=1}^{N_{}}\ell(\bm{W};z^j_{_{\mathrm{tgt}}},y^j_{_{\mathrm{tgt}}}),
$
where the subscript ``$\mathrm{tgt}$'' indicates that the loss is computed on the fine-tuning samples drawn from $P_{X, \bm{W}_{\mathrm{tgt}}}$. Expanding this loss around $\bm{W}_0$ yields
\begin{align}
    \mathcal{L}_{\mathrm{tgt}}(\bm{W}_{\mathrm{tgt}}) \approx \mathcal{L}_{\mathrm{tgt}}(\bm{W}_0)
    + \bm{G}^\top(\bm{W}_{\mathrm{tgt}}-\bm{W}_0)
    + \notag\\\tfrac{1}{2}(\bm{W}_{\mathrm{tgt}}-\bm{W}_0)^\top \bm{H}_0(\bm{W}_{\mathrm{tgt}}-\bm{W}_0),
\end{align}
where $\bm{G}^{d_1\times 1}=\nabla \mathcal{L}_{\mathrm{tgt}}(\bm{W}_0)$ is the gradient evaluated at the pre-trained parameters and $\bm{H}_0$ denotes the Hessian computed at $\bm{W}_0$. The first-order optimality condition for the target optimum $\bm{W}_{\mathrm{tgt}}$ is written as
\begin{align}
    \nabla_{\bm{W}} \mathcal{L}_{\mathrm{tgt}}(\bm{W}_{\mathrm{tgt}}) = 0 \approx \bm{G} + \bm{H}_0(\bm{W}_{\mathrm{tgt}}-\bm{W}_0).
\end{align}
Solving for the displacement gives
$\bm{W}_{\mathrm{tgt}} - \bm{W}_0 \approx -\,\bm{H}_0^{-1}\bm{G}.$
In practice, the Hessian $\bm{H}_0$ is typically approximated by the Fisher information matrix~\cite{he2024robust}. 
This is theoretically justified under standard regularity conditions for MLE with i.i.d.\ data, 
where the Fisher information coincides with the expected Hessian of the negative log-likelihood.
Therefore, we have
\begin{align}
\label{w1-w0_fisher}
    \bm{W}_{\mathrm{tgt}} - \bm{W}_0 \approx -\,\bm{J}(\bm{W}_0 )^{-1}\bm{G}.
\end{align}
After obtaining an estimator of $\bm{W}_{\mathrm{tgt}}-\bm{W}_0$ and computing 
$\bm{A}_0$ via Theorem~\ref{thm:one_dimensional}, combining with \eqref{proj_on_A}, the initialization for 
$\bm{B}$ is
\begin{align}
\label{initialization_B}
\bm{B}_0 = \bm{A}_0^{\top}(\bm{W}_{\mathrm{tgt}}-\bm{W}_0),
\end{align}
so that $\bm{W}_0+\bm{A}_0\bm{B}_0$ coincides with the projection
$\bm{W}_{\mathrm{tgt}}^{\mathrm{proj}}$.

\subsection{Standard LoRA Case}
\label{subsec:Standard_LORA_Format}
We next consider the standard LoRA case where $\bm{W} \in \mathbb{R}^{d_1 \times d_2}$. In this setting, the estimator is given by
$
    \hat{\bm{W}} = \bm{W}_0 + \bm{A} \, \bm{B},
$
where $\bm{A} \in \mathbb{R}^{d_1 \times r}$ denotes a fixed column-orthogonal matrix, and $\bm{B} \in \mathbb{R}^{r \times d_2}$ is a trainable matrix. 
\begin{theorem}(proved in Appendix \ref{appendix:standard_lora_case})
\label{thm:standard_lora_case}
In LoRA fine-tuning from $P_{X;\bm W_0}$ to $P_{X;\bm W_{\mathrm{tgt}}}$, 
we assume $\bm W_0, \bm W_{\mathrm{tgt}} \in \mathbb{R}^{d_1\times d_2}$ and
$
\| \bm W_{\mathrm{tgt}} - \bm W_0 \|_F
= O\!\left( \tfrac{1}{\sqrt{N}} \right).
$
Then, the optimal initialization of the LoRA matrix $\bm A$ is given by
\begin{align}
\bm{A_0}^*=\arg\min_{\bm{A}}\operatorname{tr}\left(\bm{A}^\top\bm{\Omega}\bm{A}\right),
\end{align}
and from Section \ref{Eigenvalue_theorem} we know that the $r$ column vectors of $\bm{A}$ correspond to the eigenvectors of $\bm{\bm{\Omega}}$ associated with its $r$ smallest eigenvalues. $\bm{\Omega}$ is the \textbf{Initialization Guidance Matrix} given by
\begin{align}
    &\bm{\Omega}^{d_1\times d_1}=\notag\\&\sum_{i=1}^{d_2}\frac{\fisherinver{}}{N_{}}-\left(\bm W_{\mathrm{tgt}} - \bm W_0\right)_{(:,i)}\left(\bm W_{\mathrm{tgt}} - \bm W_0\right)_{(:,i)}^\top,
\end{align}
where the subscript $(:,i)$ denotes the $i$-th column of a matrix. It is important to note that the columns of $\bm{W}$ are not independent, and therefore the Fisher information matrix should be defined with respect to the entire parameter matrix rather than computed separately for each column. Specifically, $\bm{J}(\bm{W}_0)^{-1}_{[i]}$ denotes the $i$-th $d_1 \times d_1$ diagonal block of the inverse Fisher matrix $\bm{J}(\mathrm{vec}(\bm{W}_0))^{-1}$, i.e.,
\begin{align}
\bm{J}(\bm{W}_0)^{-1}_{[i]} \;\triangleq\; 
\bm{J}(\mathrm{vec}(\bm{W}_0))^{-1}_{\,((i-1)d_1+1:id_1,\;(i-1)d_1+1:id_1)},
\end{align}
where $\mathrm{vec}(\cdot)$ denotes the column-wise vectorization operator that flattens $\bm{W}$ into a vector.
\end{theorem}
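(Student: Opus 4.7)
The plan is to reduce the multi-column case to the one-dimensional case already handled in Theorem~\ref{thm:one_dimensional}, by exploiting the column-separable structure induced by a fixed $\bm{A}$. Since $\bm{A}$ is shared across columns and $\bm{B}\in\mathbb{R}^{r\times d_2}$ has independent columns, the LoRA constraint on $\hat{\bm{W}}$ decouples column-wise: each column $(\hat{\bm{W}})_{(:,i)} = (\bm{W}_0)_{(:,i)} + \bm{A}\,\bm{B}_{(:,i)}$ is free to vary within the affine subspace $(\bm{W}_0)_{(:,i)} + \mathrm{col}(\bm{A})$. At the same time, the Frobenius norm decomposes as $\|\hat{\bm{W}}-\bm{W}_{\text{tgt}}\|_F^2 = \sum_{i=1}^{d_2}\|(\hat{\bm{W}})_{(:,i)}-(\bm{W}_{\text{tgt}})_{(:,i)}\|^2$, so the expected objective splits into $d_2$ per-column terms that share the same unknown $\bm{A}$.

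Next I would analyze each column as in the one-dimensional proof: split $(\hat{\bm{W}})_{(:,i)}-(\bm{W}_{\text{tgt}})_{(:,i)}$ into a deterministic bias term $(\bm{I}-\bm{A}\bm{A}^\top)\bigl((\bm{W}_{\text{tgt}})_{(:,i)}-(\bm{W}_0)_{(:,i)}\bigr)$ orthogonal to $\mathrm{col}(\bm{A})$, and a stochastic estimation term lying in $\mathrm{col}(\bm{A})$. The squared bias contribution becomes $\|(\bm{W}_{\text{tgt}}-\bm{W}_0)_{(:,i)}\|^2 - \operatorname{tr}\bigl(\bm{A}^\top (\bm{W}_{\text{tgt}}-\bm{W}_0)_{(:,i)}(\bm{W}_{\text{tgt}}-\bm{W}_0)_{(:,i)}^\top \bm{A}\bigr)$. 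For the variance term, the key step is to argue that the marginal asymptotic covariance of the $i$-th column of the constrained MLE is, to leading order, governed by the $i$-th diagonal block $\bm{J}(\bm{W}_0)^{-1}_{[i]}/N$ of the inverse Fisher computed with respect to $\mathrm{vec}(\bm{W})$. Because the LoRA parametrization restricts the column to $\mathrm{col}(\bm{A})$, the variance contribution for column $i$ reduces to $\operatorname{tr}\bigl(\bm{A}^\top \bm{J}(\bm{W}_0)^{-1}_{[i]} \bm{A}\bigr)/N$.

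Summing the bias and variance contributions over $i$, the $\bm{A}$-dependent portion of the expected loss collapses to $\operatorname{tr}(\bm{A}^\top \bm{\Omega}\bm{A})$ with $\bm{\Omega}$ exactly as stated, and the constant portion $\sum_i \|(\bm{W}_{\text{tgt}}-\bm{W}_0)_{(:,i)}\|^2 = \|\bm{W}_{\text{tgt}}-\bm{W}_0\|_F^2$ drops out of the optimization. Applying the Courant--Fischer characterization from Section~\ref{Eigenvalue_theorem} under the column-orthogonality constraint $\bm{A}^\top\bm{A}=\bm{I}_r$ then identifies $\bm{A}_0^*$ with the matrix whose columns are the eigenvectors of $\bm{\Omega}$ corresponding to its $r$ smallest eigenvalues, concluding the proof.

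The main obstacle I expect is the variance step: the full $d_1 d_2$-dimensional MLE has asymptotic covariance $\bm{J}(\bm{W}_0)^{-1}/N$ with generally non-zero off-diagonal blocks that couple columns, yet the Frobenius loss sees only the marginal diagonal-block covariances of individual columns, and the LoRA projection must be shown to act column-wise without interference from those cross-column correlations. Handling this carefully---presumably by invoking the asymptotic normality from Section~\ref{sub:Asymptotic Normality of the MLE} applied to $\mathrm{vec}(\bm{W})$, reading off marginal covariances from the block structure of the limiting Gaussian, and observing that the constraint $\bm{W}-\bm{W}_0 \in \mathrm{col}(\bm{A})$ acts identically on each column via the projector $\bm{A}\bm{A}^\top$---is the crux that upgrades the one-dimensional derivation to this multi-column Frobenius setting; once this is justified, everything else reduces to bookkeeping and a direct appeal to Section~\ref{Eigenvalue_theorem}.
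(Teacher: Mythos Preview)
Your proposal is correct and follows essentially the same route as the paper: decompose the Frobenius objective column-wise, apply the one-dimensional bias/variance analysis from Theorem~\ref{thm:one_dimensional} to each column (with the $i$-th diagonal block $\bm{J}(\bm{W}_0)^{-1}_{[i]}$ playing the role of the inverse Fisher), sum, and invoke Section~\ref{Eigenvalue_theorem}. The ``obstacle'' you flag about cross-column Fisher blocks is exactly what the paper handles implicitly---the LoRA constraint on $\mathrm{vec}(\bm{W})$ is $\bm{I}_{d_2}\otimes\bm{A}$, so after the trace inequality of the one-dimensional lemma the bound $\operatorname{tr}\bigl((\bm{I}_{d_2}\otimes\bm{A})^\top \bm{J}^{-1}(\bm{I}_{d_2}\otimes\bm{A})\bigr)$ picks out only the diagonal blocks, and the off-diagonal couplings drop out automatically.
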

Similar to \eqref{w1-w0_fisher}, the approximation of $\bm{W}_{\mathrm{tgt}}-\bm{W}_0$ is
\begin{align}
\label{w1-w0_fisher_highdimension}
    \left(\bm W_{\mathrm{tgt}} - \bm W_0\right)_{(:,i)} \approx -\,\fisherinver{}\bm{G}_{(:,i)}, 
\end{align}
where $\bm{G}^{d_1\times d_2}= \nabla \mathcal{L}_{\mathrm{tgt}}(\bm{W}_0)$. Moreover, the initialization method is the same as \eqref{initialization_B}.

\begin{remark}
To relate our method to prior gradient-based fine-tuning and to justify the optimality of our initialization, we present the following observation.
Both LoRA-GA and LoRA-One essentially perform a direct singular value decomposition (SVD) on the gradient. If we simplify our approach by (i) discarding the first term in the Initialization Guidance Matrix and (ii) directly using the negative gradient to replace the right side in Eq.~\eqref{w1-w0_fisher_highdimension}, our theoretical result in Theorem \ref{thm:standard_lora_case} reduces to
\begin{align}
\bm{A}_0^{*} = \arg\min_{\bm{A}} -\operatorname{tr}\left(\bm{A}^\top \bm{G} \bm{G}^\top \bm{A}\right).
\end{align}
In other words, the initialization of $\bm{A}$ corresponds to the leading $r$ eigenvectors of $GG^\top$, which are equivalent to the top $r$ left singular vectors of $\bm{G}$. Thus, our method in this degenerate form coincides with the strategies adopted in LoRA-GA and LoRA-One. From this analysis, two advantages of our theoretical results become evident: 
(1) the first term of the Initialization Guidance Matrix explicitly models the estimation variance induced by the stochasticity of training samples, a factor overlooked in prior studies; and 
(2) the estimation in \eqref{w1-w0_fisher_highdimension}, compared with using raw gradients alone, incorporates the Fisher information matrix to account for the anisotropy of the model. 

\end{remark}

\newcommand{\algorithmicinitialize}{\textbf{Initialize:}}
\newcommand{\Initialize}{\item[\algorithmicinitialize]}
\newcommand{\algorithmicinputy}{\textbf{Input:}}
\newcommand{\Input}{\item[\algorithmicinputy]}
\newcommand{\algorithmictrain}{\textbf{Train:}}
\newcommand{\Train}{\item[\algorithmictrain]}
\newcommand{\algorithmicre}{\textbf{Return:}}
\newcommand{\Return}{\item[\algorithmicre]}

\begin{algorithm*}[!ht]
\caption{{\color{magenta}\ourmethod{}} for one specific layer}
\label{alg:lora_da_training}
\begin{algorithmic}[1]
\Input Pre-trained weight $\bm W_0^{d_1\times d_2}$, total target data size $N$, a small subset of  target data $\mathcal S=\{(z^j, y^j)\}_{j=1}^{|\mathcal{S}|}$, 
LoRA rank $r$, 
loss function $\ell$
\mycomment{$ \text{data size } |\mathcal{S}|\ll N$ }
\Initialize
\STATE$\bm{G}^{d_1\times d_2}\gets\frac{1}{|\mathcal{S}|}\sum\limits_{(z^j,y^j)\in{\mathcal S}} \nabla_{\bm {W}}\ell(\bm{W}_0;z^j,y^j)$

\STATE$\bm{Z}_{\text{fisher
}}^{d_1\times d_1}\gets\frac{1}{|\mathcal{S}|}\sum\limits_{(z^j,y^j)\in{\mathcal S}}z^jz^{j^\top}$
\STATE$\bm{Y}_{\text{fisher
}}^{d_2\times d_2}\gets\frac{1}{|\mathcal{S}|}\sum\limits_{(z^j,y^j)\in{\mathcal S}} \nabla_{y}\ell(\bm{W}_0;z^j,y^j)\nabla_{y}\ell(\bm{W}_0;z^j,y^j)^\top$
\FOR{$i=1\,,...\,,d_2$}
\STATE $\fisherinver{}\gets \bm{Z}_{\text{fisher
}}^{-1}\times [\bm{Y}_{\text{fisher
}}^{-1}]_{(i,i)}$ \mycomment{$[\bm{Y}_{\text{fisher}}^{-1}]_{(i,i)}$ is the $(i,i)$ entry of $\bm{Y}_{\text{fisher}}^{-1}$.}

\STATE $\left(\bm W_{\mathrm{tgt}} - \bm W_0\right)_{(:,i)}\gets-\fisherinver{}G_{(:,i)}$
\ENDFOR
\STATE $\textcolor{violet}{\bm A_0 \gets \arg\min_{\bm{A}}\operatorname{tr}\left(\bm{A}^\top\left(\sum_{i=1}^{d_2}\frac{\fisherinver{}}{N_{}}-\sum_{i=1}^{d_2}\left(\bm W_{\mathrm{tgt}} - \bm W_0\right)_{(:,i)}\left(\bm W_{\mathrm{tgt}} - \bm W_0\right)_{(:,i)}^\top\right)\bm{A} \right)}$
\STATE $\textcolor{violet}{\bm B_0 \gets \bm{A}_0^\top(\bm{W}_{\mathrm{tgt}} - \bm{W}_0)}$
\Return $\bm{A}_0, \bm{B}_0$
\end{algorithmic}
\end{algorithm*}

\subsection{Algorithm}
\label{subsec:algorithm}

Grounded in Sections~\ref{subsec:one-dimensional case} and~\ref{subsec:Standard_LORA_Format}, we introduce \ourmethod{}, which translates our theory into a practical LoRA initialization scheme. Specifically, among the $N$ available target samples, \ourmethod{} requires only a small subset of target samples $\mathcal{S}$ to estimate the necessary statistics for initialization, making it lightweight. This design is motivated by prior work suggesting that, in practice, relatively few samples can already provide useful estimates of the Fisher matrix and gradient statistics~\cite{guolqlora,zhangonelora}. 
Using this subset, we compute both the gradient and the Fisher matrix. The Fisher matrix is computed using the \textbf{K-FAC}~\citep{martens2015optimizingK-FAC}, a scalable method that approximates the Fisher as a Kronecker product of smaller matrices formed by the layer input vectors and the backpropagated gradients. Moreover, for computing the eigenvalues and eigenvectors required in the initialization of $\bm{A}_0$, we employ the \textbf{LOBPCG} algorithm~\citep{knyazev2001towardLOBPCG}.
The detailed procedure of \ourmethod{} is shown in Algorithm~\ref{alg:lora_da_training}. Although \ourmethod{} is designed for the standard LoRA framework, we discuss in the Appendix~\ref{appendix:Extending lora-da to Other PEFT Methods} how this initialization can be extended to other LoRA-style PEFT methods.

Appendix~\ref{appendix:Space Complexity} shows that our algorithm introduces no significant memory overhead compared with gradient-based methods. Moreover, we note that the time cost of LOBPCG scales well to large LLMs. As shown in Algorithm~\ref{alg:lora_da_training} and Theorem~\ref{thm:standard_lora_case}, the decomposition is performed independently for each LoRA layer.
For a layer of size $d_1 \times d_2$, LOBPCG operates only on the $d_1 \times d_1$ initialization guidance matrix $\bm{\Omega}$.
Let $T_{\text{LOBPCG}}$ denote the number of iterations (typically a small constant, e.g., 10).
The per-layer computational cost is
$
O\!\left(T_{\text{LOBPCG}}\, d_1^2\, r\right).
$
Importantly, as model size increases, the growth in parameters mainly comes from adding more layers or widening feedforward blocks, rather than increasing the per-layer input dimension $d_1$.
Thus, $d_1$ does not scale with the total parameter count, and the per-layer cost remains essentially constant.
Consequently, the overall eigenvalue computation cost grows only \emph{linearly} with model depth.

\section{Experiments}
\label{sec:experiment}
In this section, we conduct experiments to evaluate \ourmethod{} against representative initialization strategies for LoRA across several NLP benchmarks.
All experiments are performed on eight NVIDIA A800 GPUs, unless stated otherwise. 
We use 256 target samples by default to estimate the required statistics, and Appendix~\ref{appendix:Fisher estimation stability} shows that Fisher estimation remains stable at this sample size.
Our baselines consist of vanilla LoRA~\citep{hu2022lora} and its initialization variants with original structure: the data-agnostic methods PiSSA~\citep{meng2024pissa} and MiLoRA~\citep{wang2025milora}, and the gradient-based method LoRA-One~\citep{zhangonelora}. We do not include other PEFT methods that modify the original LoRA structure, despite their initialization modules, in order to maintain fairness in comparison. We provide the hyperparameter settings in the Appendix~\ref{appendix:Hyperparameter_Settings}.


\begin{table*}[t]
\centering
\caption{Performance on Commonsense Reasoning Benchmarks.}
\resizebox{1.0\textwidth}{!}
{
\begin{tabular}{lcccccccccc}
\toprule
Model & PEFT & BoolQ & PIQA & SIQA & HellaSwag & WinoGrande & ARC-e & ARC-c & OBQA & Avg. \\
\midrule
ChatGPT & $-$ & 73.1 & 85.4 & 68.5 & 78.5 & 66.1 & 89.8 & 79.9 & 74.8 & 77.0 \\
\midrule
\multirow{5}{*}{LLaMA 2-7B}
& LoRA & 73.2 & 85.8 & 81.8 & 94.9 & 86.0 & 74.7 & 88.7 & 86.0 & 83.9 \\
& PiSSA & 72.5 & 85.2 & 81.9 & 94.2 & 86.7 & 73.7 & 87.1 & \textbf{87.0} & 83.5 \\
& MiLoRA & 73.1 & 85.3 & 81.8 & 95.1 & 86.3 & 75.3 & 88.6 & 86.8 & 84.0 \\
& LoRA-One & 72.8 & 85.3 & 81.9 & \textbf{95.2} & 85.6 & 74.9 & \textbf{88.8} & 86.4 & 83.9 \\
& \ourmethod & \textbf{73.2} & \textbf{86.0} & \textbf{82.4} & \textbf{95.2} & \textbf{87.1} & \textbf{75.7} & 88.7 & 86.4 & \textbf{84.3} \\
\bottomrule
\end{tabular}
}
\label{tab:commonsense-new}
\end{table*}

\begin{table*}[t]
\centering
\begin{minipage}{0.3\linewidth}
\centering
\caption{Performance on Math Reasoning Benchmarks (LLaMA 2-7B).}
\begin{tabular}{cccc}
\toprule
Method & GSM8K & MATH & Avg. \\
\midrule
LoRA & 53.1 & 8.3 & 30.7 \\
PiSSA & 53.2 & 8.2 & 30.7 \\
MiLoRA & 52.9 & 8.3 & 30.6 \\
LoRA-One & 53.7 & 8.5 & 31.1 \\
\ourmethod & \textbf{55.0} & \textbf{9.2} & \textbf{32.1} \\
\bottomrule
\end{tabular}
\label{tab:math}
\end{minipage}\hfill
\begin{minipage}{0.3\linewidth}
\centering
\caption{Performance on Math Reasoning Benchmarks (LLaMA 2-13B).}
\begin{tabular}{cccc}
\toprule
Method & GSM8K & MATH & Avg. \\
\midrule
LoRA      & 64.3 & 11.1 & 37.7 \\
PiSSA     & 65.8 & 11.9 & 38.9 \\
MiLoRA    & 64.6 & 12.2 & 38.4 \\
LoRA-One  & 65.5 & 11.3 & 38.4 \\
\ourmethod & \textbf{66.4} & \textbf{12.8} & \textbf{39.6} \\
\bottomrule
\end{tabular}
\label{tab:math_13B}
\end{minipage}\hfill
\begin{minipage}{0.2\linewidth}
\centering
\caption{Performance on Math Reasoning Benchmarks with frozen-$\bm{A}$ setting  (LLaMA 2-7B).}
\begin{tabular}{cc}
\toprule
Method & GSM8K \\
\midrule
LoRA-FA & 41.5 \\
\ourmethod & \textbf{49.4} \\
\bottomrule
\end{tabular}
\label{tab:math_fa}
\end{minipage}
\end{table*}

\subsection{Natural Language Understanding (NLU) Tasks}
We adopt \emph{commonsense reasoning} as a representative NLU task. Specifically, we fine-tune \textbf{LLaMA 2--7B}~\citep{touvron2023llama} on all samples from \textbf{Commonsense170K}~\citep{hu2023llm}, and evaluate on eight widely used benchmarks: \textbf{BoolQ}~\citep{clark2019boolq}, \textbf{PIQA}~\citep{bisk2020piqa}, \textbf{SIQA}~\citep{sap2019socialiqa}, \textbf{HellaSwag}~\citep{zellers2019hellaswag}, \textbf{WinoGrande}~\citep{sakaguchi2021winogrande}, \textbf{ARC-e} and \textbf{ARC-c}~\citep{clark2018think}, and \textbf{OBQA}~\citep{mihaylov2018can}. 
The task is formulated as a \emph{multiple-choice problem}, and we report \textbf{accuracy (\%)} on all test sets using the last checkpoint.

As shown in Table~\ref{tab:commonsense-new}, \ourmethod{} outperforms existing LoRA initialization methods in terms of average performance and across most benchmarks. On average, \ourmethod{} attains 84.3\%, exceeding the prior state-of-the-art MiLoRA (84.0\%) with a margin of 0.3 percentage points. In particular, \ourmethod{} achieves top performance on six out of eight benchmarks, demonstrating its robustness and consistency across diverse reasoning tasks. 

\subsection{Natural Language Generation (NLG) Tasks}
We use \emph{mathematical reasoning} as a representative NLG task, which requires models to generate a complete reasoning process and a final answer. Concretely, we fine-tune \textbf{LLaMA~2-7B} and \textbf{LLaMA~2-13B}~\citep{touvron2023llama} on 100K samples from \textbf{MetaMathQA}~\citep{yumetamath}, and evaluate on the official test sets of \textbf{GSM8K}~\citep{cobbe2021training} and \textbf{MATH}~\citep{hendrycks2measuring}. Unless otherwise specified, results are reported from the last checkpoint, and performance is measured by the \textbf{Exact Match (EM)} ratio between predictions and ground-truth answers.

As shown in Table~\ref{tab:math} and Table~\ref{tab:math_13B}, \ourmethod{} achieves clear improvements over existing initialization strategies on mathematical reasoning tasks on both LLaMA~2-7B and LLaMA~2-13B settings. For the LLaMA~2-7B setting in Table~\ref{tab:math}, \ourmethod{} attains the highest accuracy on both GSM8K and MATH, with accuracies of 55.0\% and 9.2\%, respectively.
Compared to the strongest baseline LoRA-One, \ourmethod{} improves the average accuracy by 1.0 percentage points. These results show that \ourmethod{} initialization is effective not only for NLU tasks but also for NLG tasks. For the LLaMA-2-13B setting in Table~\ref{tab:math_13B}, \ourmethod{} also attains the highest accuracy on both GSM8K and MATH, with accuracies of 66.4\% and 12.8\%, respectively, demonstrating its effectiveness when applied to larger model scales.  
We also conduct a comparative experiment under the frozen-$\bm{A}$ setting of LoRA-FA, as shown in Table~\ref{tab:math_fa}. The results demonstrate that our initialization theory remains effective even when $A$ is frozen.

\subsection{Performance Over Training Steps}

We further analyze the optimization dynamics by tracking loss, gradient norm, 
and evaluation accuracy throughout training on GSM8K in Fig.~\ref{fig:gsm8k_training_steps}. Compared 
with vanilla LoRA, LoRA-One achieves faster reduction of loss and higher accuracy 
in the earliest steps, since its gradient-only initialization closely aligns with the 
steepest descent direction. 
Interestingly, \ourmethod{} starts slightly behind LoRA-One at the beginning. This behavior arises because it better accounts for sample stochasticity and parameter-space anisotropy. Its early steps appear more conservative—not because the method fails to find good directions, but because it prioritizes stability over immediate descent, leading to more reliable convergence in later stages.
As training proceeds, \ourmethod{} exhibits more stable 
gradient norms, faster overall convergence, and higher final accuracy. These results 
confirm that although gradient-only methods may appear favorable in the short term, 
variance-aware initialization ultimately provides more robust optimization and 
superior long-run performance.

We provide another visualization of the descent trajectory of \ourmethod{} in Fig.~\ref{subfig:loss_landscape}. 
Unlike full fine-tuning and vanilla LoRA that follow the raw gradient direction, \ourmethod{} initialization uses a Fisher–gradient in the bias term to preserve anisotropy and Fisher information in the variance term to capture sampling-induced stochasticity.  
As a result, the descent trajectory of \ourmethod{} does not coincide with that of full fine-tuning at the beginning. The updates of \ourmethod{} appear more conservative because it prioritizes stability over immediate descent, thereby guiding the optimizer toward a more efficient convergence path than vanilla LoRA. 

\begin{figure*}[htbp]
    \centering
    \begin{subfigure}[b]{0.32\textwidth}
        \includegraphics[width=\textwidth]{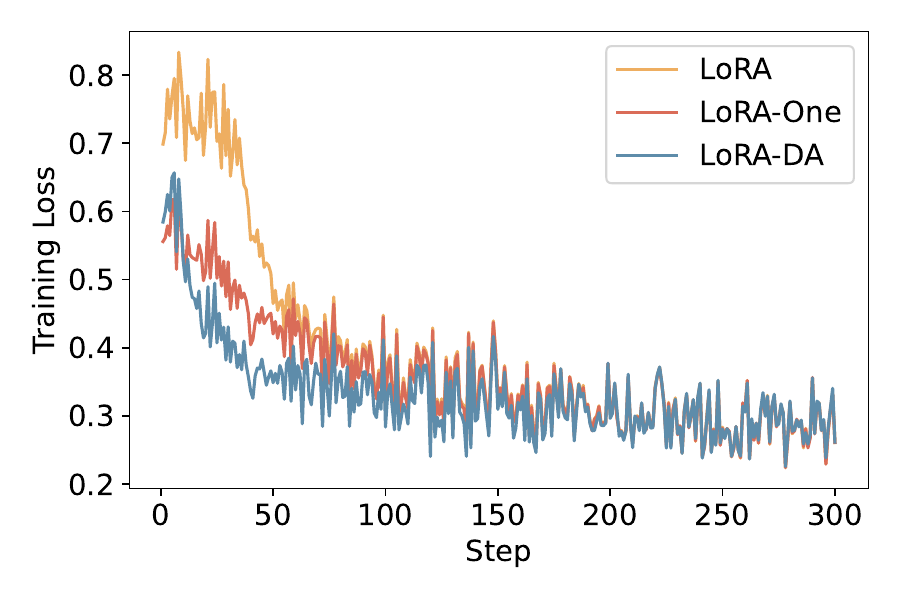}
        \caption{Loss over steps.}
        \label{subfig:loss}
    \end{subfigure}
    \hfill
    \begin{subfigure}[b]{0.32\textwidth}
        \includegraphics[width=\textwidth]{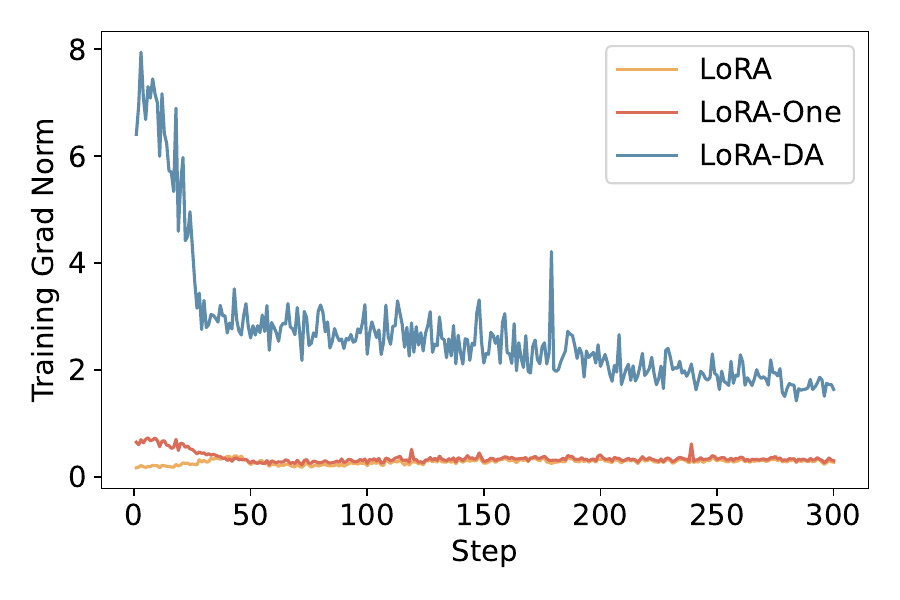}
        \caption{Gradient norm over steps.}
        \label{subfig:grad_norm}
    \end{subfigure}
    \hfill
    \begin{subfigure}[b]{0.32\textwidth}
        \includegraphics[width=\textwidth]{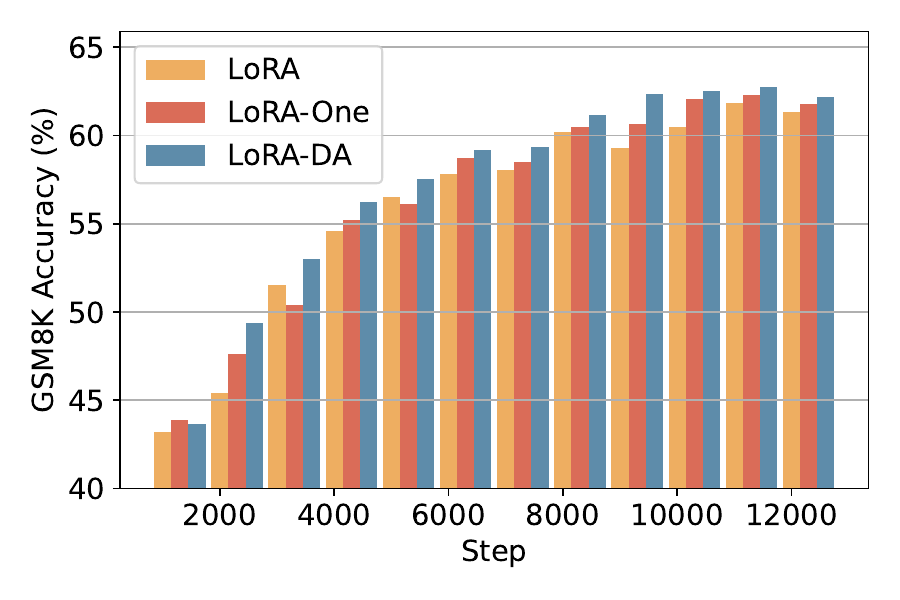}
        \caption{Accuracy over steps.}
        \label{subfig:step}
    \end{subfigure}
    \caption{
The loss, grad norm, and evaluation accuracy on GSM8K over the training steps of LoRA (indicated in yellow), LoRA-One (in red), and \ourmethod{} (in blue)}
    \label{fig:gsm8k_training_steps}
    \vskip-1em
\end{figure*}

\begin{figure}[htbp]
    \centering
    \includegraphics[width=0.42\textwidth]{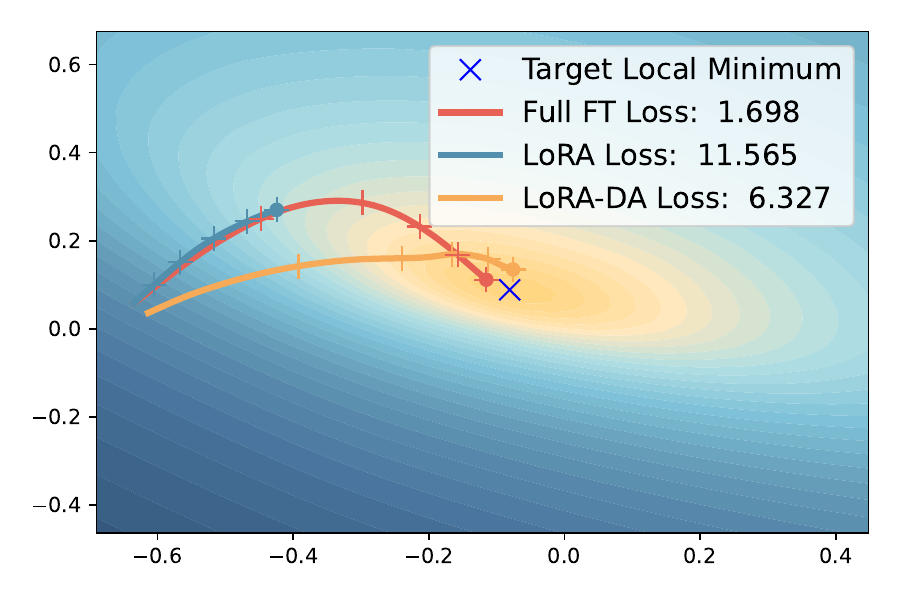}
    \caption{\textbf{Descent trajectory of \ourmethod{}.} We follow the visualization setup in \cite{meng2024pissa}. Pre-training is conducted on 10,000 odd-numbered images from the MNIST dataset, followed by fine-tuning on 1,000 even-numbered images. The LoRA rank is set to 4, and the learning rate is set to $5 \times 10^{-4}$.}
    \label{subfig:loss_landscape}
    \vskip-1em
\end{figure}

\subsection{Experiments on Various Ranks}

We further evaluate \ourmethod{} under different ranks $r \in \{1,2,4,8,16\}$ on GSM8K, 
and compare it with LoRA and LoRA-One in Figure~\ref{subfig:rank}. Across all settings, 
\ourmethod{} consistently outperforms both baselines, confirming its robustness. Notably, the advantage of \ourmethod{} is more pronounced at 
smaller ranks. This can be explained by the fact that when the rank is severely 
limited, the choice of descent directions becomes particularly critical: a suboptimal 
initialization may constrain the model to an ineffective subspace that cannot be 
recovered during training. By contrast, \ourmethod{} leverages Fisher-gradient
and variance term to identify more informative low-rank directions, which 
substantially improves optimization in the low-rank regime. As the rank increases, 
the subspace coverage grows and the relative gap between methods narrows, yet 
\ourmethod{} still achieves the best performance across all ranks. 

\subsection{Initialization Overhead}

Table~\ref{tab:ourmethod_init_time} summarizes the initialization time, training time, and total time of \ourmethod{} under different ranks on the GSM8K task. 
Here, the training time corresponds to the fine-tuning time shared by all LoRA-based methods, including the baselines, while the initialization time reflects the additional overhead introduced by LoRA-DA. 
Notably, initialization accounts for only about 6\% of the total time, indicating that the extra cost of LoRA-DA remains small. 
Moreover, the initialization time remains stable across different ranks, suggesting that \ourmethod{} scales well under varying rank settings.

\subsection{Ablation Study}

Table~\ref{tab:ablation} reports the ablation study on GSM8K. Our method consists of two components: bias and variance. Removing the bias term (\ourmethod{} w/o bias) or the variance term (\ourmethod{} w/o var) both lead to slight drops in performance. Furthermore, \ourmethod{} w/o var\&fisher, which estimates the bias term using plain gradients instead of Fisher-gradient, performs slightly worse than \ourmethod{} w/o var. The full method (\ourmethod{}) achieves the best results, showing that both components are essential for optimal performance. We further provide a layer-wise ablation in Appendix~\ref{app:layer_ablation}, showing that LoRA-DA brings larger gains when applied to FFN layers, while also yielding consistent improvements on attention layers.

\section{Related Work}

PEFT methods fall into three categories: \textit{adapter-based}~\citep{houlsby2019parameter,he2021towards}; \textit{prompt-based}~\citep{lester2021power,razdaibiedina2023residual}; and \textit{LoRA-based}~\citep{hu2022lora} methods. 
LoRA variants such as AdaLoRA~\citep{adalorazhang2023}, DoRA~\citep{liu2024dora}, and VeRA~\citep{kopiczkovera} improve efficiency or stability, but most rely on random initialization, leading to slow warm-up and possible suboptimal convergence. Principled initialization has thus been explored. PiSSA~\citep{meng2024pissa} and MiLoRA~\citep{wang2025milora} are data-agnostic, based on singular components of pre-trained weights, whereas data-aware methods leverage target samples, typically by using early gradients as in LoRA-GA~\citep{wang2024loraga} and LoRA-One~\citep{zhangonelora}. However, their single-step reliance limits effectiveness. Beyond initialization, LoRA-Pro~\citep{wanglorapro} re-weights low-rank gradients during fine-tuning to better approximate full-model updates.
We provide an extended version of the related work in the Appendix~\ref{Extended_Related_Work}.



\section{Conclusion}

We theoretically derive a data-aware initialization method for LoRA via asymptotic analysis.
The proposed formulation decomposes the expected estimation error into a variance term characterized by the Fisher information 
and an anisotropy-aware bias term captured by the Fisher-gradient, and combines them to construct the initialization subspace. 
Based on this theory, we propose \ourmethod{}, which uses a small set of target samples to compute LoRA initialization. On NLU and NLG benchmarks, 
\ourmethod{} improves final accuracy over prior LoRA initializations. Supplementary studies show faster and more 
stable convergence, robustness across ranks, and only a small initialization overhead.

\section*{Acknowledgements}
This work is supported in part by 
National Key R\&D Program of China under Grant 2021YFA0715202 and the National Natural Science Foundation of China under Grants 62571296.

\section*{Impact Statement}

This paper presents work whose goal is to advance the field of Machine
Learning. There are many potential societal consequences of our work, none
which we feel must be specifically highlighted here.

\bibliography{example_paper}
\bibliographystyle{icml2026}

\newpage
\appendix
\onecolumn

\section*{Appendix}

\section{Notations}
\begin{table}[ht]
\centering
\caption{Notation used in this paper.}
\begin{tabular}{lll}
\toprule
\textbf{Symbol} & \textbf{Meaning} & \textbf{Shape / Type} \\
\midrule
$P_{X,\bm W_0}$ & Pre-trained model (data distribution) & \\
$P_{X,\bm W_{\mathrm{tgt}}}$ & Target model (fine-tuning distribution) & \\
$X$ & Joint variable/distribution of inputs and outputs & \\
$Z$ & Model inputs/features & \\
$Y$ & Model outputs/labels & \\
$\bm W_0$ & Pre-trained parameter matrix & $\mathbb{R}^{d_1\times d_2}$ \\
$\bm W_{\mathrm{tgt}}$ & Target task true parameter matrix & $\mathbb{R}^{d_1\times d_2}$ \\
$\hat{\bm W}$ & Estimated parameter after fine-tuning & $\mathbb{R}^{d_1\times d_2}$ \\
$\Delta \bm W$ & Parameter update (LoRA low-rank) & $\mathbb{R}^{d_1\times d_2}$ \\
$\bm A$ & LoRA left matrix & $\mathbb{R}^{d_1\times r}$ \\
$\bm B$ & LoRA right matrix & $\mathbb{R}^{r\times d_2}$ \\
$r$ & LoRA rank & $r \ll \min(d_1,d_2)$ \\
$d_1, d_2$ & Row/column dimensions of parameter matrix & \\
$\mathcal{L}_{\mathrm{tgt}}(\bm W)$ & Empirical loss on target domain & \\
$\nabla \mathcal{L}_{\mathrm{tgt}}(\bm W_0)$ & Gradient at $\bm W_0$ & $\mathbb{R}^{d_1\times d_2}$ \\
$\bm G$ & Gradient at $\bm W_0$ & $\mathbb{R}^{d_1\times d_2}$ \\
$\bm H_0$ & Hessian at $\bm W_0$ & $\mathbb{R}^{(d_1 d_2)\times(d_1 d_2)}$ \\
$\bm J(\cdot)$ & Fisher information matrix & \\
$\bm J(\cdot)^{-1}$ & Inverse Fisher information matrix & \\
$\bm{J}(\bm{W}_0)^{-1}_{[i]}$ & the $i$-th $d_1 \times d_1$ diagonal block of $\bm{J}(\mathrm{vec}(\bm{W}_0))^{-1}$ & $\mathbb{R}^{d_1\times d_1}$\\
$\mathrm{vec}(\cdot)$ &  the column-wise vectorization operator that flattens matrix into a vector& \\
$N$ & Number of target-domain samples & $\mathbb{N}$ \\
$\bm W_{\mathrm{tgt}}^{\mathrm{proj}}$ & Projection of $\bm W_{\mathrm{tgt}}$ onto $W_0+\{AB\}$ & $\mathbb{R}^{d_1\times d_2}$ \\
$\bm P=\bm A\bm A^\top$ & Orthogonal projector onto subspace spanned by $A$ & $\mathbb{R}^{d_1\times d_1}$ \\

$\bm Z_{\text{fisher}}$ & K-FAC left factor (input second moments) & $\mathbb{R}^{d_1\times d_1}$ \\
$\bm Y_{\text{fisher}}$ & K-FAC right factor (output-gradient second moments) & $\mathbb{R}^{d_2\times d_2}$ \\
$\bm A_0$ & Initialized $A$ & $\mathbb{R}^{d_1\times r}$ \\
$\bm B_0$ & Initialized $B$ & $\mathbb{R}^{r\times d_2}$ \\
$\mathcal{S}$ & Small target-domain sample set & \\
$\bm \Omega$ & Initialization Guidance Matrix & $\mathbb{R}^{d_1\times d_1}$ \\
$\bm U,\bm \Lambda$ & Eigenvectors/eigenvalues (EVD) & \\
$\bm u_i$ & $i$-th eigenvector & \\
$\lambda_i$ & $i$-th eigenvalue (ascending) & \\
$(:,i)$ & The $i$-th column of a matrix (colon indexing) & \\
$(i,i)$ & The $(i,i)$-th (diagonal) entry & \\
$\|\cdot\|_F$ & Frobenius norm & \\
$\operatorname{tr}(\cdot)$ & Trace operator & \\
\bottomrule
\end{tabular}
\end{table}

\newpage

\section{Extended Related Work}
\label{Extended_Related_Work}
\subsection{Parameter-Efficient Fine-Tuning (PEFT)}
PEFT aims to reduce the computational and memory overhead of adapting large pre-trained models, while still achieving performance comparable to full model fine-tuning. Existing PEFT techniques can be broadly divided into three categories, namely \textit{adapter-based}, \textit{prompt-based}, and \textit{LoRA-based} methods. \textbf{Adapter-based methods}~\citep{houlsby2019parameter,he2021towards,karimi2021compacter}insert small trainable modules into frozen layers of the backbone network, enabling efficient task adaptation without updating the full set of parameters. \textbf{Prompt-based methods}~\citep{lester2021power,razdaibiedina2023residual,wang2023non} optimize additional continuous tokens (soft prompts) or prefix vectors that steer the behavior of the pre-trained model. These approaches enable highly lightweight adaptation with minimal trainable parameters. However, their performance is often sensitive to initialization and may vary considerably across different tasks. The above two categories, by altering either the architecture or the input representation, tend to introduce extra inference overhead compared to the backbone model.

\subsection{Low-Rank Adaptation (LoRA) and its Variants and Initialization} The third category, \textbf{LoRA-based methods}, performs parameter-efficient fine-tuning by constraining weight updates to a low-rank decomposition. Instead of directly updating the full parameter matrix, LoRA reparameterizes the update as the product of two low-rank matrices, which drastically reduces the number of trainable parameters while enabling the merged weights to be seamlessly integrated into the backbone at inference time without additional latency~\citep{hu2022lora}. 
For instance, LoRA-FA~\citep{zhang2023loraFA} introduces parameter freezing to reduce memory usage during adaptation; AdaLoRA~\citep{adalorazhang2023} dynamically allocates rank across layers according to their importance, improving overall parameter efficiency;  DoRA~\citep{liu2024dora} decomposes pre-trained weights into magnitude and direction, applying low-rank updates only to the directional component for greater stability and expressiveness; VeRA~\citep{kopiczkovera} re-parameterizes LoRA by introducing shared low-rank 
vectors across layers, thereby reducing the number of trainable parameters while 
maintaining competitive performance. 
However, LoRA and many of its variants typically adopt random initialization for the low-rank matrix $A$ and zero initialization for $B$. Such uninformed initialization introduces two main drawbacks. First, the training process tends to progress slowly in the early stages, as the update directions are not aligned with informative subspaces. Second, the optimization may converge to suboptimal solutions, since the imposed low-rank structure restricts the search space to poorly initialized directions.

To overcome these limitations, several works propose more principled initialization strategies. 
Some work investigates LoRA’s two zero-start initialization schemes: initializing $\bm{A}$ with $\bm{B}=\bm{0}$ versus initializing $\bm{B}$ with $\bm{A}=\bm{0}$, and demonstrates that $\bm{A}$-initialization is more effective~\citep{hayou2024impact}.
PiSSA~\citep{meng2024pissa} initializes the low-rank matrices $\bm{A}$ and $\bm{B}$ using the principal singular components of the pre-trained weights, while the remaining components are used to initialize the residual weights $\bm{W}$. In contrast, MiLoRA~\citep{wang2025milora} leverages minor singular components for initializing $\bm{A}$ and $\bm{B}$, thereby preserving dominant directions of the pre-trained model and exploiting underutilized subspaces for adaptation. Early approaches of this kind are generally \textit{data-agnostic}, as they rely solely on the weight structure. More recently, \textit{data-aware} methods have been explored, which explicitly incorporate information from the target task. For example, LoRA-GA~\citep{wang2024loraga} employs gradient alignment to initialize the low-rank subspace, and LoRA-One~\citep{zhangonelora} derives its initialization by decomposing the fine-tuning gradient at the first optimization step. However, most existing data-aware approaches rely on such a direct decomposition of the first-step gradient. Since model performance after a single update step is typically unsatisfactory, initialization strategies grounded solely in this early gradient can be limited in effectiveness. In addition, several variants leverage gradients to assist training but are not directly related to initialization. For instance, LoRA-Pro~\citep{wanglorapro} improves LoRA by strategically re-weighting the gradients of the low-rank matrices during fine-tuning, such that their product produces a low-rank update that more faithfully approximates the full-model gradient, thereby narrowing the performance gap to full fine-tuning.


\newpage
\section{Proofs}
\subsection{Proof of Theorem~\ref{thm:one_dimensional}} \label{appendix:one_dimensional}

We assume that the orthogonal projection of $\bm{W}_{\mathrm{tgt}}$ onto the LoRA subspace is denoted by $\bm{W}_{\mathrm{tgt}}^{\mathrm{proj}}$, as shown in Figure \ref{fig:variance_proj}. Under this assumption, the training objective can be decomposed as  
\begin{align}
\label{error_decomposition}
    \mathbb{E} \left[ \left\| \hat{\bm{W}} - \bm{W}_{\mathrm{tgt}} \right\|_F^2 \right] 
    = \mathbb{E} \left[ \left\| \hat{\bm{W}} - \bm{W}_{\mathrm{tgt}}^{\mathrm{proj}} \right\|_F^2 \right] 
    + \left\| \bm{W}_{\mathrm{tgt}}^{\mathrm{proj}} - \bm{W}_{\mathrm{tgt}} \right\|_F^2,
\end{align}
where the first term represents the expected estimation variance within the subspace, and the second term corresponds to the bias arising from the distance between $\bm W_{\mathrm{tgt}}$ and the LoRA subspace.

For the first term of \eqref{error_decomposition}, by Section~\ref{sub:Asymptotic Normality of the MLE}, we know that if $\hat{\bm W}$ is not restricted to the LoRA form, the training variance is given by $\frac{\operatorname{tr}\left(\bm{J}(\bm W_{\mathrm{\mathrm{tgt}}})^{-1}\right)}{N}$. In the presence of the LoRA structure with a fixed orthonormal basis $\bm A$, this variance is no longer full, but have a upper bound which is the projection of the unconstrained variance onto the subspace spanned by $\bm A$.

\begin{lemma}
Let $\{x_i\}_{i=1}^N$ be i.i.d.\ samples drawn from the target distribution
$P_{X;\bm W_{\mathrm{tgt}}}$.
Let $\hat{\bm W}$ denote the maximum likelihood estimator
constrained to the subspace
$\mathcal V\defeq\bigl\{\bm{W}_0 + \bm{A}\bm{B} \big| 
    \bm{A} \in \mathbb{R}^{d_1 \times r}, 
    \bm{B} \in \mathbb{R}^{r \times 1} \bigr\}$, i.e.,
\begin{align}
\hat{\bm{W}} =
\argmax_{\bm{W} \in \mathcal V}
    \frac{1}{N_{}}
\sum_{i=1}^{N_{}} \log P_{X, \bm{W}}(x_i).
\end{align}
where $\bm{A}\in\mathbb{R}^{d_1\times r}$ is fixed with orthonormal columns and $\bm{B}\in\mathbb{R}^{r\times 1}$ is the variable. Let $\bm{W}_{\mathrm{tgt}}^{\mathrm{proj}}$ denote the projection of $\bm{W}_{\mathrm{tgt}}$ onto this subspace. Then, we have 

\begin{align}
\label{variance_origin}
    \mathbb{E} \left[ \left\| \hat{\bm{W}} - \bm{W}_{\mathrm{tgt}}^{\mathrm{proj}} \right\|_F^2 \right] \le\frac{\operatorname{tr}\left(\bm{A}^\top\bm{J}(\bm{W}_{\mathrm{tgt}})^{-1}\bm{A}\right)}{N_{}}+o\left(\tfrac{1}{N}\right).
\end{align}
\begin{proof}
According to prior analyses of constrained maximum likelihood estimation \citep{aitchison1958maximum,geyer1994asymptotics}, we have the following auxiliary lemma. However, the notation in the two cited references differs substantially from ours, making a direct adaptation unclear. While we acknowledge that the following auxiliary lemma originates from these references and is not our novel contribution, to avoid ambiguity, we now provide a complete and self-contained proof.
\begin{auxlemma}
\label{1958_lemma}
\begin{align}
\label{1958_result}
\mathbb{E}\!\left[\left\|\hat{\bm{W}} - \bm{W}_{\mathrm{\mathrm{tgt}}}^{\mathrm{proj}}\right\|_F^2\right]
= \frac{\operatorname{tr}\!\bigl((\bm A^\top \bm{J}(\bm W_{\mathrm{\mathrm{tgt}}})\bm A)^{-1}\bigr)}{N}+o\left(\tfrac{1}{N}\right).
\end{align}
\end{auxlemma}
\begin{proof}


There exists $\bm{B}^{*}$ such that    
$$
\bm{W}_{\mathrm{tgt}}^{\mathrm{proj}} =\bm{W}_0 + \bm{A}\ \bm{B}^{*} ,
$$
maximizes the likelihood for parameter $\bm{W}$ over the subspace $\mathcal V$, and the constrained MLE is denoted by $$\hat{\bm{W}} = \bm{W}_0 + \bm{A} \hat{\bm{B}}.$$
Let $\bm{g}_{\bm{B}}$ denotes the derivative of $\log P_{X; \bm{W}}=\log P_{X; \bm{W}_0 + \bm{A}\bm{B}}$ with respect to $\bm{B}$,  
and let $\bm{g}_{\bm{W}}$ denotes the derivative of $\log P_{X; \bm{W}_0 + \bm{A}\bm{B}}$ with respect to $\bm{W}$.
Using the chain rule on $\bm{W}= \bm{W}_0 + \bm{A} \bm{B}$ gives
$$
\bm{g}_{\bm{B}} = \bm{A}^\top \bm{g}_{\bm{W}}.  
$$
Thus, we have
$$
\bm{J}_{\bm{B}}(\bm{B}^{*})=\mathbb{E} [ \bm{g}_{\bm{B}} \bm{g}_{\bm{B}}^\top ]=\bm{A}^\top \mathbb{E} [ \bm{g}_{\bm{W}} \bm{g}_{\bm{W}}^\top ] \bm{A}=
\bm{A}^\top \bm{J}(\bm{W}_{\mathrm{tgt}}^{\mathrm{proj}}) \bm{A}.
$$

From the MLE asymptotics in Section \ref{sub:Asymptotic Normality of the MLE}, we have

$$
\sqrt{N} ( \hat{{\bm{B}}} - {\bm{B}}^{\!*} )
\to
N( 0 , (\bm{A}^\top \bm{J}(\bm{W}_{\mathrm{tgt}}^{\mathrm{proj}})\bm{A})^{-1} ) .
$$

Since

$$
\hat{\bm{W}} - \bm{W}_{\mathrm{tgt}}^{\mathrm{proj}} = \bm{A} ( \hat{{\bm{B}}} - {\bm{B}}^{*} ) ,
$$
we have
$$
Cov(\hat{\bm{W}} - \bm{W}_{\mathrm{tgt}}^{\mathrm{proj}})=
\frac{1}{N} \bm{A} (\bm{A}^\top \bm{J}(\bm{W}_{\mathrm{tgt}}^{\mathrm{proj}}) \bm{A})^{-1} \bm{A}^\top.
$$

Combining the assumption $\| \bm{W}_{\mathrm{tgt}} - \bm{W}_0 \|_F = O\left(\tfrac{1}{\sqrt{N_{}}}\right)$ with the positional relationship in Fig.~\ref{fig:variance_proj}
, we can gain
$\| \bm{W}_{\mathrm{tgt}} -\bm{W}_{\mathrm{tgt}}^{\mathrm{proj}} \|_F = O\left(\tfrac{1}{\sqrt{N_{}}}\right)$. From the result in \cite{zhang2025theoretical}, we know that their Fisher matrix is also close, i.e., $ \bm{J}(\bm{W}_{\mathrm{tgt}}^{\mathrm{proj}})=\bm{J}( \bm{W}_{\mathrm{tgt}})+O(\tfrac{1}{\sqrt{N_{}}})$, and a similar proof is also provided in~\eqref{appendix_Fisher_matrixs_relations}. Thus, we have

$$
Cov(\hat{\bm{W}} - \bm{W}_{\mathrm{tgt}}^{\mathrm{proj}})=\frac{1}{N} \bm{A} \left(\bm{A}^\top \left(\bm{J}\left( \bm{W}_{\mathrm{tgt}}+O\left(\tfrac{1}{\sqrt{N_{}}}\right)\right) \right)\bm{A}\right)^{-1} \bm{A}^\top=
\frac{1}{N} \bm{A} (\bm{A}^\top \bm{J}( \bm{W}_{\mathrm{tgt}}) \bm{A})^{-1} \bm{A}^\top+
o(\tfrac{1}{N}).
$$
Therefore, we have

$$
\mathbb{E}  [|| \hat{\bm{W}} - \bm{W}_{\mathrm{tgt}}^{\mathrm{proj}} ||_F^{2} ]=
\frac{1}{N}tr\left( \bm{A} (\bm{A}^\top\bm{J}(\bm{W}_{\mathrm{tgt}})\bm{A})^{-1}\bm{A}^\top\right)+ o(\tfrac{1}{N}).$$

Using $\bm{A}^\top \bm{A} = I_r$ and the cyclic trace identity, we can prove the \eqref{1958_result} in  Auxiliary Lemma \ref{1958_lemma}, i.e., 
$$ \mathbb{E}  [|| \hat{\bm{W}} - \bm{W}_{\mathrm{tgt}}^{\mathrm{proj}} ||_F^{2} ] = \frac{ tr\left( (\bm{A}^\top \bm{J}(\bm{W}_{\mathrm{tgt}})\bm{A})^{-1} \right) }{N} + o\left(\tfrac{1}{N}\right).$$
\end{proof}

However, the form of Auxiliary Lemma \ref{1958_lemma} is less convenient for the subsequent analysis, and we therefore 
apply a relaxation. In the following, we will show that 
\begin{align}
\operatorname{tr}\!\big((\bm A^\top \bm{J}(\bm W_{\mathrm{\mathrm{tgt}}}) \bm A)^{-1}\big) \leq 
\operatorname{tr}\!\big(\bm A^\top \bm{J}(\bm W_{\mathrm{\mathrm{tgt}}})^{-1}\bm A\big).
\end{align}
We recall a variational identity: for any symmetric positive definite 
matrix $\bm{M} \in \mathbb{R}^{d \times d}$ and vector $v \in \mathbb{R}^d$,
\begin{align}
v^\top \bm{M}^{-1} v 
&= \max_{x \in \mathbb{R}^d} \left( 2 v^\top x - x^\top \bm{M} x \right).
\label{eq:variational_identity}
\end{align}
This follows from completing the square, with the maximum attained at 
$x^\star = \bm{M}^{-1} v$.

Now set $\bm{M} = \bm{J}(\bm W_{\mathrm{\mathrm{tgt}}})$ and $v = \bm{A} y$ for arbitrary $y \in \mathbb{R}^r$. 
By~\eqref{eq:variational_identity}, we have
\begin{align}
y^\top \bm{A}^\top \bm{J}(\bm W_{\mathrm{\mathrm{tgt}}})^{-1} \bm{A} y
&= \max_{x \in \mathbb{R}^d} \left( 2 y^\top \bm{A}^\top x - x^\top \bm{J}(\bm W_{\mathrm{\mathrm{tgt}}}) x \right).
\label{eq:unconstrained}
\end{align}
Restricting the maximization in~\eqref{eq:unconstrained} 
to the subspace spanned by the columns of $\bm A$ cannot increase the maximum. By writing $x = \bm{A} u$ with $u \in \mathbb{R}^r$, we obtain
\begin{align}
\label{eq:trace_inequality_1}
y^\top \bm{A}^\top \bm{J}(\bm W_{\mathrm{\mathrm{tgt}}})^{-1} \bm{A} y
&\ge \max_{u \in \mathbb{R}^r} \left( 2 y^\top u - u^\top (\bm{A}^\top \bm{J}(\bm W_{\mathrm{\mathrm{tgt}}}) \bm{A}) u \right).
\end{align}
Applying~\eqref{eq:variational_identity} again with 
$\bm{M} = \bm{A}^\top \bm{J}(\bm W_{\mathrm{\mathrm{tgt}}}) \bm{A}$ and $v = y$, it follows that
\begin{align}
\label{eq:trace_inequality_2}
y^\top (\bm{A}^\top \bm{J}(\bm W_{\mathrm{\mathrm{tgt}}}) \bm{A})^{-1} y=\max_{x \in \mathbb{R}^r} \left( 2 y^\top x - x^\top (\bm{A}^\top \bm{J}(\bm W_{\mathrm{\mathrm{tgt}}}) \bm{A}) x \right)
.
\end{align}
The right-hand side of \eqref{eq:trace_inequality_1} 
is equivalent to the right-hand side of \eqref{eq:trace_inequality_2}.
Thus, for all $y \in \mathbb{R}^r$,
\begin{align}
y^\top \bm{A}^\top \bm{J}(\bm W_{\mathrm{\mathrm{tgt}}})^{-1} \bm{A} y 
&\ge y^\top (\bm{A}^\top \bm{J}(\bm W_{\mathrm{\mathrm{tgt}}}) \bm{A})^{-1} y,
\end{align}
which implies the Loewner order
\begin{align}
(\bm{A}^\top \bm{J}(\bm W_{\mathrm{\mathrm{tgt}}}) \bm{A})^{-1} 
&\preceq \bm{A}^\top \bm{J}(\bm W_{\mathrm{\mathrm{tgt}}})^{-1} \bm{A}.
\end{align}
Finally, since the trace is monotone with respect to the Loewner order on 
positive semidefinite matrices, we conclude
\begin{align}
\label{tr_inequality}
\operatorname{tr}\!\left((\bm{A}^\top \bm{J}(\bm W_{\mathrm{\mathrm{tgt}}}) \bm{A})^{-1}\right)
\le \operatorname{tr}\!\left(\bm{A}^\top \bm{J}(\bm W_{\mathrm{\mathrm{tgt}}})^{-1} \bm{A}\right).
\end{align}
Equality holds if and only if the unconstrained maximizer 
$x^\star = \bm{J}(\bm W_{\mathrm{\mathrm{tgt}}})^{-1} \bm{A} y$ always lies in the subspace spanned by the columns of $\bm A$. 
This is equivalent to requiring that the subspace spanned by the columns 
of $\bm A$ be invariant under $\bm J(\bm W_{\mathrm{tgt}})$, for instance, when the columns of $A$ are eigenvectors of $\bm{J}(\bm W_{\mathrm{\mathrm{tgt}}})$, 
or when $\bm{J}(\bm W_{\mathrm{\mathrm{tgt}}})$ is a scalar multiple of the identity matrix.

Combining \eqref{tr_inequality} and \eqref{1958_result}, we can prove \eqref{variance_origin}.

\end{proof}

\end{lemma}

Given that $\bm{W}_{\mathrm{tgt}}$, $\bm{W}_0$ are already assumed to be sufficiently close, we can use this, along with a Taylor expansion, to approximate the distance between their inverse Fisher information matrices. We conclude the following lemma.
\begin{lemma}
\label{lemma_fisher_distance}
For $\bm{W}_{\mathrm{tgt}} \text{ and } \bm{W}_0$ satisfy $\|\bm{W}_{\mathrm{tgt}} - \bm{W}_0 \|_F = O\left(\tfrac{1}{\sqrt{N_{}}}\right)$, the discrepancy between ${\bm{J}}(\bm{W}_{\mathrm{tgt}})^{-1}$ and ${\bm{J}}(\bm{W}_0)^{-1}$ is of the order $O\left(\frac{1}{\sqrt{N_{}}}\right)$,i.e.,
\begin{align}
\bm{J}(\bm{W}_{\mathrm{tgt}})^{-1}=\bm{J}(\bm{W}_0)^{-1}+O(\tfrac{1}{\sqrt{N_{}}}).
\end{align}
\end{lemma}

\begin{proof}
\begin{align}
&\bm{J}(\bm{W}_{\mathrm{tgt}}) \\&= \mathbb{E}_{\bm{W}_{\mathrm{tgt}}} \bigg[ \left( \frac{\partial}{\partial \bm{W}} \log P_{X;\bm{W}_{\mathrm{tgt}}} \right) \left( \frac{\partial}{\partial \bm{W}} \log P_{X;\bm{W}_{\mathrm{tgt}}} \right)^\top\bigg]\notag\\
    &=\mathbb{E}_{\bm{W}_{\mathrm{tgt}}} \bigg[ \left( \frac{\partial}{\partial \bm{W}} \log P_{X;\bm{W}_{0}}+\frac{\partial^2\log P_{X;\bm{W}_{0}}}{\partial \bm{W}^2}(\bm{W}_{\mathrm{tgt}}-\bm{W}_{0})+O(\tfrac{1}{N_{}}) \right)\notag\\
    &\qquad\qquad\quad\left( \frac{\partial}{\partial \bm{W}} \log P_{X;\bm{W}_{0}}+\frac{\partial^2\log P_{X;\bm{W}_{0}}}{\partial \bm{W}^2}(\bm{W}_{\mathrm{tgt}}-\bm{W}_{0})+O(\tfrac{1}{N_{}}) \right)^\top \bigg]\notag\\
    &=\mathbb{E}_{\bm{W}_{\mathrm{tgt}}} \bigg[ \left( \frac{\partial}{\partial \bm{W}} \log P_{X;\bm{W}_{0}} \right) \left( \frac{\partial}{\partial \bm{W}} \log P_{X;\bm{W}_{0}} \right) ^\top \bigg]+O(\tfrac{1}{\sqrt{N_{}}})\notag\\
    &=\sum_{x\in \cX}P_{X;\bm{W}_{\mathrm{tgt}}}(x)\left( \frac{\partial}{\partial \bm{W}} \log P_{X;\bm{W}_{0}} (x)\right) \left( \frac{\partial}{\partial \bm{W}} \log P_{X;\bm{W}_{0}} (x)\right) ^\top+O(\tfrac{1}{\sqrt{N_{}}}) \notag\\
    &=\sum_{x\in \cX}\left(P_{X;\bm{W}_{0}}(x)+\frac{\partial P_{X;\bm{W}_{0}}}{\partial \bm{W}}(\bm{W}_{\mathrm{tgt}}-\bm{W}_{0})+O(\tfrac{1}{N_{}})\right)
    \left( \frac{\partial}{\partial \bm{W}} \log P_{X;\bm{W}_{0}} (x)\right) \left( \frac{\partial}{\partial \bm{W}} \log P_{X;\bm{W}_{0}} (x)\right) ^\top+O(\tfrac{1}{\sqrt{N_{}}})  \notag\\
    &=\sum_{x\in \cX}P_{X;\bm{W}_{0}}(x)\left( \frac{\partial}{\partial \bm{W}} \log P_{X;\bm{W}_{0}} (x)\right) \left( \frac{\partial}{\partial \bm{W}} \log P_{X;\bm{W}_{0}} (x)\right) ^\top+O(\tfrac{1}{\sqrt{N_{}}}) \notag\\
    &=\bm{J}(\bm{W}_{0})+O(\tfrac{1}{\sqrt{N_{}}}).\label{appendix_Fisher_matrixs_relations}
\end{align}

Assume that $\bm{J}(\bm W_0)$ is invertible with bounded condition number, and we have know that
\begin{align}
\bm{J}(\bm W_{\mathrm{tgt}})=\bm{J}(\bm W_0)+O\!\bigl(\tfrac{1}{\sqrt{N_{}}}\bigr).
\end{align}

By the resolvent identity \citep{horn2012matrix}, we have
\begin{align}
\bm{J}(\bm W_{\mathrm{tgt}})^{-1}-\bm{J}(\bm W_0)^{-1}
=\bm{J}(\bm W_{\mathrm{tgt}})^{-1}\bigl(\bm{J}(\bm W_0)-\bm{J}(\bm W_{\mathrm{tgt}})\bigr)\bm{J}(\bm W_0)^{-1},
\end{align}
which can be easily proved by multiplying both sides of the equation on the left by $\bm{J}(\bm W_{\mathrm{tgt}})$.

Taking norms on both sides yields
\begin{align}
\label{inverse_inequality}
\bigl\|\bm{J}(\bm W_{\mathrm{tgt}})^{-1}-\bm{J}(\bm W_0)^{-1}\bigr\|
\le \|\bm{J}(\bm W_{\mathrm{tgt}})^{-1}\|\|\bm{J}(\bm W_{\mathrm{tgt}})-\bm{J}(\bm W_0)\|\|\bm{J}(\bm W_0)^{-1}\|.
\end{align}
We analyze the right-hand side of the inequality. Since the Fisher matrix and its inverse are both of constant order and
$\|\bm{J}(\bm W_{\mathrm{tgt}})-\bm{J}(\bm W_0)\|=O(\tfrac{1}{\sqrt{N_{}}})$, the right-hand side of the inequality is of order 
$O\!\left(\tfrac{1}{\sqrt{N}}\right)$. Therefore, from \eqref{inverse_inequality} we can prove the  Lemma~\ref{lemma_fisher_distance}, i.e.,
\begin{align}
\bm{J}(\bm W_{\mathrm{tgt}})^{-1}=\bm{J}(\bm W_0)^{-1}+O\!\bigl(\tfrac{1}{\sqrt{N_{}}}\bigr).
\end{align}

\end{proof}
Combining Lemma~\ref{lemma_fisher_distance} and \eqref{variance_origin}, we have
\begin{align}
\label{variance_2nd}
    \mathbb{E} \left[ \left\| \hat{\bm{W}} - \bm{W}_{\mathrm{tgt}}^{\mathrm{proj}} \right\|_F^2 \right] \le\frac{\operatorname{tr}\left(\bm{A}^\top\left(J(\bm{W}_0)^{-1}+O\!\bigl(\tfrac{1}{\sqrt{N_{}}}\bigr)\right)\bm{A}\right)}{N_{}}=\frac{\operatorname{tr}\left(\bm{A}^\top J(\bm{W}_0)^{-1}\bm{A}\right)}{N_{}}
    +o\left(\tfrac{1}{N}\right)
\end{align}

For the second term of \eqref{error_decomposition}, by the Pythagorean theorem in Euclidean space, we have
\begin{align}
\label{pythagorean_projection}
\left\|\bm W_{\mathrm{tgt}}^{\mathrm{proj}}-\bm W_{\mathrm{tgt}}\right\|_F^2
= \left\|\bm W_{\mathrm{tgt}}-\bm W_0\right\|_F^2-
\left\|\bm W_{\mathrm{tgt}}^{\mathrm{proj}}-\bm W_0\right\|_F^2
.
\end{align}

Since $\bm A^\top \bm A=\bm I$, 
$\bm A\bm A^\top$ is the orthogonal projection matrix onto the column space of $\bm A$. 
Therefore, as illustrated in Figure \ref{fig:variance_proj}, the projection of the difference 
$\bm W_{\mathrm{tgt}}-\bm W_0$ onto the subspace spanned by the columns of $\bm A$ is
\begin{align}
\label{proj_on_A}
(\bm W_{\mathrm{tgt}}-\bm W_0)^{\mathrm{proj}} 
= \bm A \bm A^\top (\bm W_{\mathrm{tgt}}-\bm W_0)
= \bm W_{\mathrm{tgt}}^{\mathrm{proj}}-\bm W_0.
\end{align}

Moreover, we have
\begin{align}
\label{projected_norm}
\left\|\bm W_{\mathrm{tgt}}^{\mathrm{proj}}-\bm W_0\right\|_F^2
&=
\left\|\bm A\bm A^\top(\bm W_{\mathrm{tgt}}-\bm W_0)\right\|_F^2 \notag \\
&=
\operatorname{tr}\!\left(
(\bm W_{\mathrm{tgt}}-\bm W_0)^\top
\bm A\bm A^\top \bm A\bm A^\top
(\bm W_{\mathrm{tgt}}-\bm W_0)
\right) \notag \\
&=
\operatorname{tr}\!\left(
(\bm W_{\mathrm{tgt}}-\bm W_0)^\top
\bm A\bm A^\top
(\bm W_{\mathrm{tgt}}-\bm W_0)
\right) \notag \\
&=
\left\|\bm A^\top(\bm W_{\mathrm{tgt}}-\bm W_0)\right\|_F^2.
\end{align}

Combining \eqref{pythagorean_projection} and \eqref{projected_norm}, we have
\begin{align}
\label{bias}
\left\| \bm{W}_{\mathrm{tgt}}^{\mathrm{proj}} - \bm{W}_{\mathrm{tgt}} \right\|_F^2
=\left\| \bm{W}_{\mathrm{tgt}} - \bm{W}_0 \right\|_F^2-\left\| \bm{A}^\top\left(\bm{W}_{\mathrm{tgt}} - \bm{W}_0 \right) \right\|_F^2.
\end{align}

Combining~\eqref{error_decomposition} and \eqref{variance_2nd} and \eqref{bias}, we have,
\begin{align}
    \mathbb{E} \left[ \left\| \hat{\bm{W}} - \bm{W}_{\mathrm{tgt}} \right\|_F^2 \right] 
    \le &\left\| \bm{W}_{\mathrm{tgt}} - \bm{W}_0 \right\|_F^2  \notag \\&+\operatorname{tr}\left(\bm{A}^\top\left(\frac{\bm{J}(\bm{W}_0)^{-1}}{N_{}}-\left(\bm{W}_{\mathrm{tgt}} - \bm{W}_0 \right)\left(\bm{W}_{\mathrm{tgt}} - \bm{W}_0 \right)^\top\right)\bm{A}\right)
    +o\left(\tfrac{1}{N}\right)
\end{align}

\subsection{Proof of Theorem~\ref{thm:standard_lora_case}} \label{appendix:standard_lora_case}

Similar to \eqref{error_decomposition}, the training objective can be decomposed as
\begin{align}
\label{error_decomposition_standardlora}
    \mathbb{E} \left[ \left\| \hat{\bm{W}} - \bm{W}_{\mathrm{tgt}} \right\|_F^2 \right] 
    = \mathbb{E} \left[ \left\| \hat{\bm{W}} - \bm{W}_{\mathrm{tgt}}^{\mathrm{proj}} \right\|_F^2 \right] 
    + \left\| \bm{W}_{\mathrm{tgt}}^{\mathrm{proj}} - \bm{W}_{\mathrm{tgt}} \right\|_F^2,
\end{align}
where the first term represents the expected estimation variance within the subspace, and the second term corresponds to the bias arising from the distance between $\bm W_{\mathrm{tgt}}$ and the LoRA subspace.

For the first term of \eqref{error_decomposition_standardlora}, similar to \eqref{variance_2nd}, we know that
\begin{align}
\label{variance_standardlora_2nd}
    \mathbb{E} \left[ \left\| \hat{\bm{W}} - \bm{W}_{\mathrm{tgt}}^{\mathrm{proj}} \right\|_F^2 \right] &=\sum_{i=1}^{d_2}\mathbb{E} \left[ \left\| \hat{(\bm{W}} - \bm{W}_{\mathrm{tgt}}^{\mathrm{proj}})_{(:,i)} \right\|_F^2 \right]\notag \\
    &\le\sum_{i=1}^{d_2}\operatorname{tr}\left(\frac{\bm{A}^\top\fisherinver{}\bm{A}}{N_{}} \right)
    +o\left(\tfrac{1}{N}\right),
\end{align}
where $\fisherinver{}$ is denoted in Theorem~\ref{thm:standard_lora_case}.

For the second term of \eqref{error_decomposition_standardlora}, similar to \eqref{bias}, we know that
\begin{align}
\label{bias_standardlora}
\left\| \bm{W}_{\mathrm{tgt}}^{\mathrm{proj}} - \bm{W}_{\mathrm{tgt}} \right\|_F^2
&=\sum_{i=1}^{d_2}\left[ \left\| (\bm{W}_{\mathrm{tgt}}^{\mathrm{proj}} - \bm{W}_{\mathrm{tgt}})_{(:,i)} \right\|_F^2 \right] \notag\\
&=\sum_{i=1}^{d_2}\left\|( \bm{W}_{\mathrm{tgt}} - \bm{W}_0)_{(:,i)} \right\|_F^2-\left\|  \bm{A}^\top\left(\bm W_{\mathrm{tgt}} - \bm W_0\right)_{(:,i)}\right\|_F^2
\end{align}

Combining \eqref{error_decomposition_standardlora},  
\eqref{variance_standardlora_2nd} and \eqref{bias_standardlora}, we have
\begin{align}
    \mathbb{E} \left[ \left\| \hat{\bm{W}} - \bm{W}_{\mathrm{tgt}} \right\|_F^2 \right] 
    \le &\sum_{i=1}^{d_2}\left\|( \bm{W}_{\mathrm{tgt}} - \bm{W}_0)_{(:,i)} \right\|_F^2\notag\\&+\mathrm{tr}\left(\bm{A}^\top\sum_{i=1}^{d_2}\left(\frac{\fisherinver{}}{N_{}}-\left(\bm W_{\mathrm{tgt}} - \bm W_0\right)_{(:,i)}\left(\bm W_{\mathrm{tgt}} - \bm W_0\right)_{(:,i)}^\top\right)\bm{A} \right)
    +o\left(\tfrac{1}{N}\right)
\end{align}


\newpage
\section{Discussion on Space Complexity of the Algorithm}
\label{appendix:Space Complexity}
In our Algorithm~\ref{alg:lora_da_training}, during the computation of statistics in Lines~1--3, we need to maintain 
$\bm G \in \mathbb{R}^{d_1 \times d_2}$, 
$\bm Y \in \mathbb{R}^{d_2 \times d_2}$, 
and $\bm Z \in \mathbb{R}^{d_1 \times d_1}$. 
In fact, only the diagonal entries of $\bm Y$ are required, which reduces its memory footprint to 
$\mathcal{O}(d_2)$. 
Moreover, $\bm G$ and the pair of $\bm Z,\bm Y$ can be computed in two separate passes rather than stored simultaneously. 
Excluding the storage of the base weight $\bm W_0$, which already requires $\mathcal{O}(d_1 d_2)$, the additional peak memory of our method is bounded by $\mathcal{O}(\max\{d_1^2,d_1 d_2\})$.
In practical architectures such as LLaMA, LoRA is typically applied to attention and feed-forward 
projection layers where $d_1$ and $d_2$ are of comparable scale, so that $d_1^2$ and $d_1 d_2$ 
are of the same order. 
Hence, the peak memory usage of our algorithm is comparable to the gradient computation in LoRA-One which need a $\mathcal{O}(d_1 d_2)$ additional space to storage gradient. 
After these computations, all intermediate quantities can be offloaded from memory. 
At Line~7, we only need to maintain a $d_1 \times d_1$ coefficient matrix for the quadratic optimization, 
and the required statistics can be loaded in blocks from external storage. 
Furthermore, since we do not process all layers simultaneously, but instead compute the optimal 
LoRA initialization layer by layer in sequence, the overall peak memory usage remains bounded 
by the same order. 
\textbf{Therefore, our method achieves data-aware initialization without incurring higher space complexity than existing gradient-based approaches.}

\section{Extending \ourmethod{} to Other LoRA-Style PEFT Methods}
\label{appendix:Extending lora-da to Other PEFT Methods}
The Fisher-guided idea in \ourmethod{} can extend naturally to other PEFT architectures, such as DoRA~\cite{liu2024dora} and VeRA~\cite{kopiczkovera}. In \ourmethod{}, we compute a Fisher-informed low-rank direction $\bm A$ that captures the dominant curvature around $\bm W_0$. This same $\bm A$ can be directly reused in DoRA: DoRA separates the update into a magnitude term and a direction vector, and our Fisher-based $\bm A$ provides exactly such an initialization for the direction component. Similarly, VeRA factorizes the update through a shared low-rank dictionary; the Fisher-guided $\bm A$ can be used to initialize the dictionary atoms or the update subspace. Although \ourmethod{} computes $\bm A$ on a per-layer basis, this is not in conflict with VeRA’s cross-layer sharing. VeRA constructs a shared low-rank dictionary across layers, and the Fisher-guided directions from \ourmethod{} can be aggregated (e.g., via averaging, clustering, or PCA across layers) to form high-quality dictionary atoms. In this way, \ourmethod{} supplies curvature-aware update directions that replace the random shared vectors used in VeRA, injecting task-aware structure into the VeRA parameterization.  


\newpage
\section{Accuracy Across Different Ranks}

\begin{figure}[htbp]
    \centering
    \includegraphics[width=0.5\textwidth]{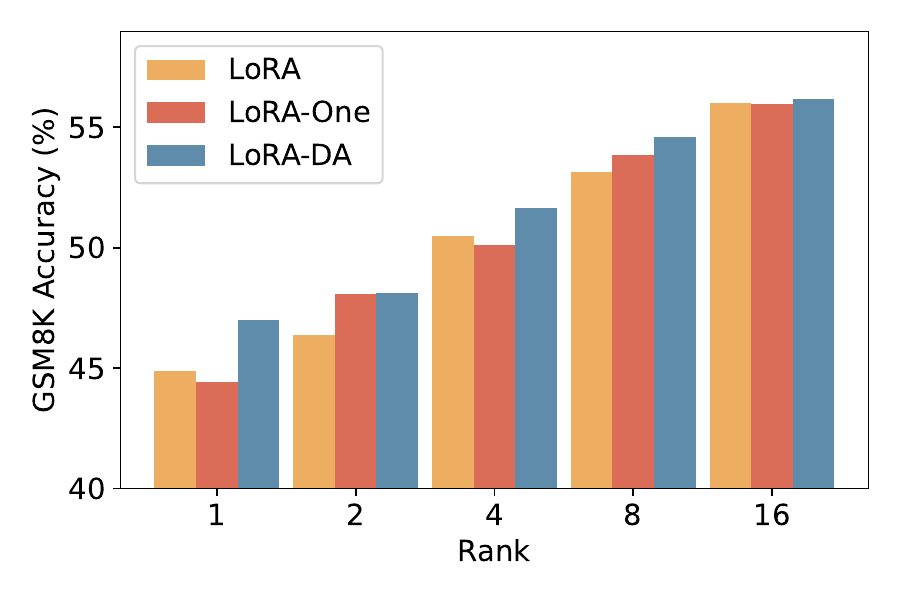}
    \caption{Accuracy of \ourmethod{} across different ranks on the GSM8K task.}
    \label{subfig:rank}
\end{figure}

\section{Time Consumption Across Different Ranks}

\begin{table}[h]
\centering
\caption{Time consumption of \ourmethod{} under different ranks on the GSM8K task, including initialization, training, and total time.}
\begin{tabular}{c|ccccc}
\midrule
Rank & 1 & 2 & 4 & 8 & 16 \\
\midrule
Initialization Time & 2:03 & 2:04 & 2:04 & 2:05 & 2:06 \\
Training Time       & 30:49 & 31:42 & 31:56 & 32:08 & 32:16 \\
Total Time          & 32:52 & 33:46 & 34:00 & 34:13 & 34:22 \\
\midrule
\end{tabular}
\label{tab:ourmethod_init_time}
\end{table}

\newpage
\section{Ablation Experiments}

\begin{table}[h]
\centering
\caption{Ablation study of \ourmethod{} on the GSM8K benchmark, evaluating the contributions of the bias and variance components.}
\begin{tabular}{cc}
\toprule
Method & GSM8K \\
\midrule
\ourmethod{} w/o bias & 53.6 \\
\ourmethod{} w/o var  & 53.3 \\
\ourmethod{} w/o var\&Fisher  & 53.0 \\
\ourmethod{}          & \textbf{55.0} \\
\bottomrule
\end{tabular}
\label{tab:ablation}
\end{table}

\section{Layer-wise Ablation}
\label{app:layer_ablation}

We further conduct a layer-wise ablation by applying LoRA-DA only to selected layers while keeping standard random initialization for the remaining layers. 
As shown in Table~\ref{tab:layer_ablation}, the gains are more pronounced in FFN layers, while applying LoRA-DA to attention layers also yields smaller but consistent improvements.

\begin{table}[h]
\centering
\caption{Layer-wise ablation of LoRA-DA on GSM8K. LoRA-DA is applied only to selected layer types, while the remaining layers use standard random initialization.}
\label{tab:layer_ablation}
\begin{tabular}{lc}
\toprule
\textbf{Initialization Setting} & \textbf{GSM8K} \\
\midrule
LoRA (all random init)        & 53.1 \\
LoRA-DA on attention only     & 53.8 \\
LoRA-DA on FFN only           & 54.3 \\
LoRA-DA on all layers         & 55.0 \\
\bottomrule
\end{tabular}
\end{table}

\newpage
\section{Fisher Estimation Stability}
\label{appendix:Fisher estimation stability}
LQ-LoRA~\cite{guolqlora} shows that the Fisher matrix can be estimated from limited data without noticeably degrading downstream performance. 
To further validate this observation in our setting, we conduct an additional experiment on GSM8K with LLaMA-2-7B, where we vary the number of samples used for Fisher estimation during initialization. 
The results indicate that the Fisher matrix remains stable across a wide range of sample sizes, and that \ourmethod{} achieves reliable update directions even with a small sample budget (256 samples in our default setting).

\begin{table}[h]
\centering
\caption{Sensitivity to the number of samples used for Fisher estimation on GSM8K (LLaMA-2-7B).}
\begin{tabular}{lc}
\toprule
\textbf{Fisher Samples} & \textbf{Accuracy (\%)} \\
\midrule
LoRA-DA-4096 & 55.1 \\
LoRA-DA-1024 & 55.1 \\
LoRA-DA-256  & 55.0 \\
LoRA-DA-64   & 54.7 \\
LoRA-DA-16   & 54.5 \\
\bottomrule
\end{tabular}
\label{tab:fisher_samples}
\end{table}

\section{Hyperparameter Settings}
\label{appendix:Hyperparameter_Settings}

To ensure a fair comparison among different low-rank adaptation methods, we used a unified hyperparameter configuration for \ourmethod{}, LoRA, PiSSA, MiLoRA, and LoRA-One. 
This configuration was applied consistently to both NLG and NLU tasks, and the detailed hyperparameter settings are summarized in Table~\ref{tab:hyperparams}. 
We evaluated the models using the LLM-adapters framework~\citep{hu2023llm}. Notably, for \ourmethod{} and LoRA-One, we pre-sampled 256 examples to estimate both the gradient and the Fisher information matrix during initialization.

\begin{table}[h]
\centering
\caption{Hyperparameter settings for \ourmethod{} and baseline methods.}
\begin{tabular}{ll}
\midrule
\textbf{Hyperparameter} & \textbf{Value} \\
\midrule
LoRA Rank ($r$) & 8 \\
LoRA Alpha ($\alpha$) & 16 \\
LoRA Dropout & 0 \\
Target Modules (7B) & Q, K, V, O, Gate, Up, Down \\
Target Modules (13B) & Q, K, V \\
Sequence Length & 1024 \\
Batch Size & 32 \\
Gradient Samples & 256 \\
Learning Rate & $2 \times 10^{-4}$ \\
Weight Decay & 0 \\
Warmup Ratio & 0.03 \\
Warmup Steps & 100 \\
LR Scheduler & Cosine \\
Epochs (7B) & 1 \\
Epochs (13B) & 2 \\
Optimizer & AdamW \\
\midrule
\end{tabular}
\label{tab:hyperparams}
\end{table}

\section{Reproducibility and Implementation Details}
We have made efforts to ensure the reproducibility of our results. A detailed description of the proposed algorithm is provided in Algorithm~\ref{alg:lora_da_training}, and all theoretical results are accompanied by complete proofs in the appendix. All datasets used are publicly available. We provide the hyperparameter settings in the Appendix~\ref{appendix:Hyperparameter_Settings}. The source code will be released upon publication to facilitate reproducibility and allow further verification of our results.




\end{document}